\DeclareMathOperator*{\diag}{diag}
\newtheorem{theorem}{Theorem}
\newtheorem{prop}{Proposition}
\newtheorem{corollary}{Corollary}
\newtheorem*{objective*}{Objective}
\DeclareMathOperator*{\sech}{sech}
\DeclareMathOperator*{\cov}{cov}
\DeclareMathOperator*{\argmin}{arg\,min}
\newcommand*{\tran}{^{\mathsf{T}}}
\newcommand*{\overbar}[1]{\mkern 1mu\overline{\mkern-1mu#1\mkern-1mu}\mkern 1mu}
\begin{document}

\title{\bfseries A sequential sampling strategy for extreme event statistics in nonlinear dynamical systems}
\author{\normalsize Mustafa A. Mohamad and Themistoklis P. Sapsis\\
\normalsize Department of Mechanical Engineering, Massachusetts Institute of Technology, Cambridge MA 02139}

\maketitle

\begin{abstract}
We develop a method for the evaluation of extreme event statistics associated with nonlinear dynamical systems, using a   small number of samples. From an initial dataset of design points, we formulate a sequential strategy that provides the ‘next-best’ data point (set of  parameters) that when evaluated results in  improved estimates of the probability density function (pdf) for a scalar quantity of interest. The approach utilizes Gaussian process regression to perform  Bayesian inference on the parameter-to-observation map describing the quantity of interest. We then approximate the desired pdf along with uncertainty bounds utilizing the posterior distribution of the inferred  map. The ‘next-best’ design point is sequentially determined through an optimization procedure that selects the point in parameter space that maximally reduces uncertainty between the estimated bounds of the pdf prediction. Since the optimization process utilizes only information  from the inferred map it has minimal computational cost.  Moreover, the special form of the metric emphasizes the tails of the pdf. The method is practical for systems where the dimensionality of the parameter space is of moderate size, i.e. order ${\mathcal{O}(10)}$. We apply the method to estimate the extreme event statistics for a very high-dimensional system with millions of degrees of freedom:  an offshore platform subjected to three-dimensional irregular waves. It is demonstrated that the developed approach can accurately determine the extreme event statistics using limited number of samples.
\end{abstract}

\paragraph{Keywords}  {Extreme events $|$   Adaptive sampling $|$ Sequential experimental design } 
\vspace{1em}\newline
Understanding the statistics of extreme events in dynamical systems of high complexity is of vital importance for reliability assessment and design. We formulate  a new method to pick samples optimally so that we have rapid convergence of the full statistics (complete probability distribution) of a quantity of interest,  including  the tails that describe extreme events. This  is important  for large scale problems in science and engineering where we desire to predict the statistics of relevant quantities but can only afford a limited number of simulations or experiments due to their cost. We demonstrate our approach in a hydromechanical system with millions of degrees of freedom where only 10-20 carefully selected samples can lead to accurate approximation of the extreme event statistics.

\section{Introduction}

For many natural and engineering systems, extreme events, corresponding to large excursions, have significant consequences and are important to predict. Examples include extreme economic events, such as credit shocks~\cite{Fouque2011}, rogue waves in the ocean~\cite{Liu2007}, and extreme climate events~\cite{Hense2006}. Extreme events ‘live’ in the tails of a probability distribution function (pdf), thus it is critical to quantify the pdf many standard deviations away from the mean.  For most real-world problems, the underlying processes are far too complex to enable estimation of the tails through direct simulations or repeated experiments. This is a result of the low probabilities of extreme events, which necessitates a large number of experiments or ensembles to resolve their statistics. For random dynamical systems with inherently nonlinear dynamics (expressed through intermittent events, nonlinear energy transfers, broad energy spectrum, and large intrinsic dimensionality) we are usually limited to a few ensemble realizations. 

The setup in this article involves a stochastic dynamical system that depends on a set of random parameters  with known probability distribution. We assume that the dimensionality of the random parameters is or can be reduced to a moderate size  $\mathcal{O}(10)$. Because of the inherent stochastic and transient character of extreme responses, it is not sufficient to consider the dynamical properties of the system  independently from the statistical characteristics of solutions. A statistical approach to this problem has important limitations, such as requiring various extrapolation schemes due to  insufficient sample numbers (see extreme value theorems~\cite{Nicodemi2012}). Another strategy is large deviations theory~\cite{Varadhan1984,Dematteis2017}, a method for the probabilistic quantification of large fluctuations in systems, which involves identifying a large deviations principle that explains the least unlikely rare event. While  applied to many problems, for complex systems estimating the rate function can be very costly and the principle does not characterize the full probability distribution. The resulting distributions via such approaches cannot always capture the non-trivial shape of the tail, dictated by physical laws in addition to statistical characteristics. On the other hand, in a dynamical systems approach there are no sufficiently generic efficient methods to infer statistical information from dynamics. For example, the Fokker-Planck equation~\cite{Sobczyk1991} is   challenging to solve even in moderate to low dimensions~\cite{Masud2005}. To this end, it is essential to consider blended strategies. The utilization of combined dynamic-stochastic models for the prediction of extreme events have also been advocated and employed in climate science and meteorology by others~\cite{Franzke2017,Majda2016,Chen2017}. In~\cite{Mohamad2016a,Mohamad2015} a probabilistic decomposition of  extreme events was utilized to efficiently characterize the probability distribution of complex systems, which considered both the statistical characteristics of trajectories and the mechanism triggering the instabilities (extreme events). While effective, the proposed decomposition of intermittent regimes requires explicit knowledge of the dynamics  triggering the extremes, which may not be available or easily determined for arbitrary dynamical systems.

We formulate a sequential   method for capturing  the statistics of an observable that is, for example,  a functional of the state of a dynamical system or a physical experiment. The  response of the observable   is modeled using  a machine learning  method that infers the functional form of the  quantity of interest  by utilizing only a few strategically sampled numerical simulations or experiments. Combining the   predictions from the machine learning model, using  the    Gaussian process regression framework,  with  available statistical information on the random input  parameters, we formulate an optimization problem that provides the next-best or most informative experiment that should be performed  to maximally reduce uncertainty in the pdf prediction of  extreme events (tails of the distribution) according to a proposed `distance` metric. To account for tail features  the metric utilize a logarithmic transformation of the pdfs,  which is similar to the style of working on the rate function in the large deviation principle. The optimization process relies exclusively on the inferred  properties on the parameter-to-observation map and no additional simulations are required in the optimization. For the optimization problem to be  practically solvable  we require the parameter space to be of  moderate size, on the    order ${\mathcal{O}(10)}$. The proposed method  allows us  to sequentially   sample  the parameter space in order to rapidly capture the pdf and, in particular, the tails of the distribution of the observable of interest.


\section{Problem setup and method overview}\label{sec:problem_setup}

Consider a dynamical system with state variable $u \in \mathbb{R}^n$, 
\begin{equation}\label{eq:orig_system}
    \frac{du}{dt} =  {g}(t,u;\theta(\omega)),    \quad \omega \in \Omega,
\end{equation}
where $\Omega$ is the sample space in  an  appropriate probability space (we denote the density of the random variable $X$ by $f_X$ and its cumulative density by $F_X$). The random variable $\theta : \Omega \to U \subset  \mathbb{R}^m$ parameterizes sources of uncertainty, such as  stochastic forcing terms  or system parameters with \emph{a priori} known distribution $f_\theta$. For fixed $\omega \in \Omega$, the response $u$ is a deterministic function in time.  We are interested in estimating the pdf $\hat f$  of a scalar quantity of interest or observable $q\in \mathbb{R}$  given by
\begin{equation}
    q = \hat T(\theta)  \triangleq  \mathcal F(u  ) + \varepsilon
\end{equation}
where $\hat T:  U  \subset \mathbb{R}^m  \to \mathbb{R}^1 $ is a   continuous  \emph{parameter-to-observation} map, $\mathcal F$ is an arbitrary  functional of $u$, and $\varepsilon$ is  some   observational or numerical noise term, which we take as zero, without loss of generality. In our setup the unknown parameter-to-observation-map $\hat  T$ is \emph{expensive} to evaluate, representing,  for example,  a large scale numerical simulation or a costly physical experiment, and so we desire to minimize the number of  evaluations of this mapping. Note that the true  statistics of the random variable $q$ induced by $\hat T$ is,
\begin{equation}
    \hat f(s) =  \frac{d}{ds} \hat F(s) = \frac{d}{ds}\mathbb P( \hat T(\theta) \le s  ) = \frac{d}{ds}\int_{A(s)} f_\theta(\theta) \, d\theta,
\end{equation}
where $A(s) = \{ \theta \in U \colon \hat T(\theta) \leq s \}$ and $\hat F$ is the cumulative density function of $q$.

Our objective is to estimate the statistics, especially non-Gaussian features,  of the observable $q$, i.e. the  pdf of the random variable induced by the mapping $\hat  T(\theta)$ which we denote by $\hat f$:
\begin{quote} 
Consider the quantity of interest $q$ with pdf $\hat f$ induced by the unknown mapping $\hat T(\theta)$,  where $\theta$ is a random valued parameter with known pdf $f_\theta$. Given a dataset $\mathcal D_{n-1} = \{( \theta_i, \hat T(  \theta_i)) \}_{i=1}^{n-1}$  of  size $n-1$, so that the estimated  distribution of $q$ using a learning algorithm from this dataset  is $f_{D_{n-1}}$,   determine  the next   input parameter  $\theta$   such that when the map  is evaluated at this new sample  the error between  $f_{D_n}$ and $\hat f$ is minimized,  placing special   emphasis on the tails of the distribution where $ \lvert \hat T(\theta) \rvert$ is large.
\end{quote}
If we consider the augmented $\theta$-parameterized dataset $\mathcal D_n(\theta) = \mathcal D_{n-1} \cup \{(\theta, \hat T(\theta)) \}$ for all $\theta \in U$, and denote the learned density by pdf $f_{\mathcal D_n(\theta)}$, the next parameter $\theta_n$ is obtained by minimizing a distance metric between two probability distributions,  $\theta_n = \argmin_\theta Q(f_{\mathcal D_n(\theta)}, \hat f)$

Determining  the next-best experiment $\theta_{n}$ should  not  involve    the   expensive-to-evaluate map $\hat T$   nor  $\hat f$. The main computational savings of our proposed method involves (1) using a inexpensive-to-compute surrogate model to replace $\hat T(\theta)$ appearing in the $\theta$-parametrized dataset  $D_n(\theta)$ and using a version of the distance metric without explicit dependence on $\hat f$. Here we  utilize  Gaussian process regression (GPR), as the   learning algorithm to construct the surrogate for $\hat T$. Using the posterior distribution of the inferred map through the GPR scheme,  we estimate the pdf for the quantity of interest as well as the pdfs that correspond to the confidence intervals on the map. The distance metric $Q$, is then  based on   minimization of the   logarithmic transform  of the pdfs that correspond to the map upper and lower confidence intervals from the posterior variance, which does not involve $\hat f$.   This optimization problem provides the `next-best' point in a sequential fashion.

The overall  aim is to accurately capture   the tail statistics of $\hat f$ through a minimum number of observations of $q$. Note that the resulting method should not  need to densely sample all regions in $U$, since not all regions have significant probability ($f_\theta$ may be negligible)  or importance ($\vert \hat T\rvert $ may be small). The formulated   sampling strategy should accurately predict the tail region of the pdf  taking into account both the magnitude of the map $\lvert{\hat T}\rvert$ and the value of the probability of the sample $\theta$, see Fig.~\ref{fig:map_overview}. 
\begin{figure}[tbhp!]
    \centering
    \includegraphics[width=0.5\linewidth]{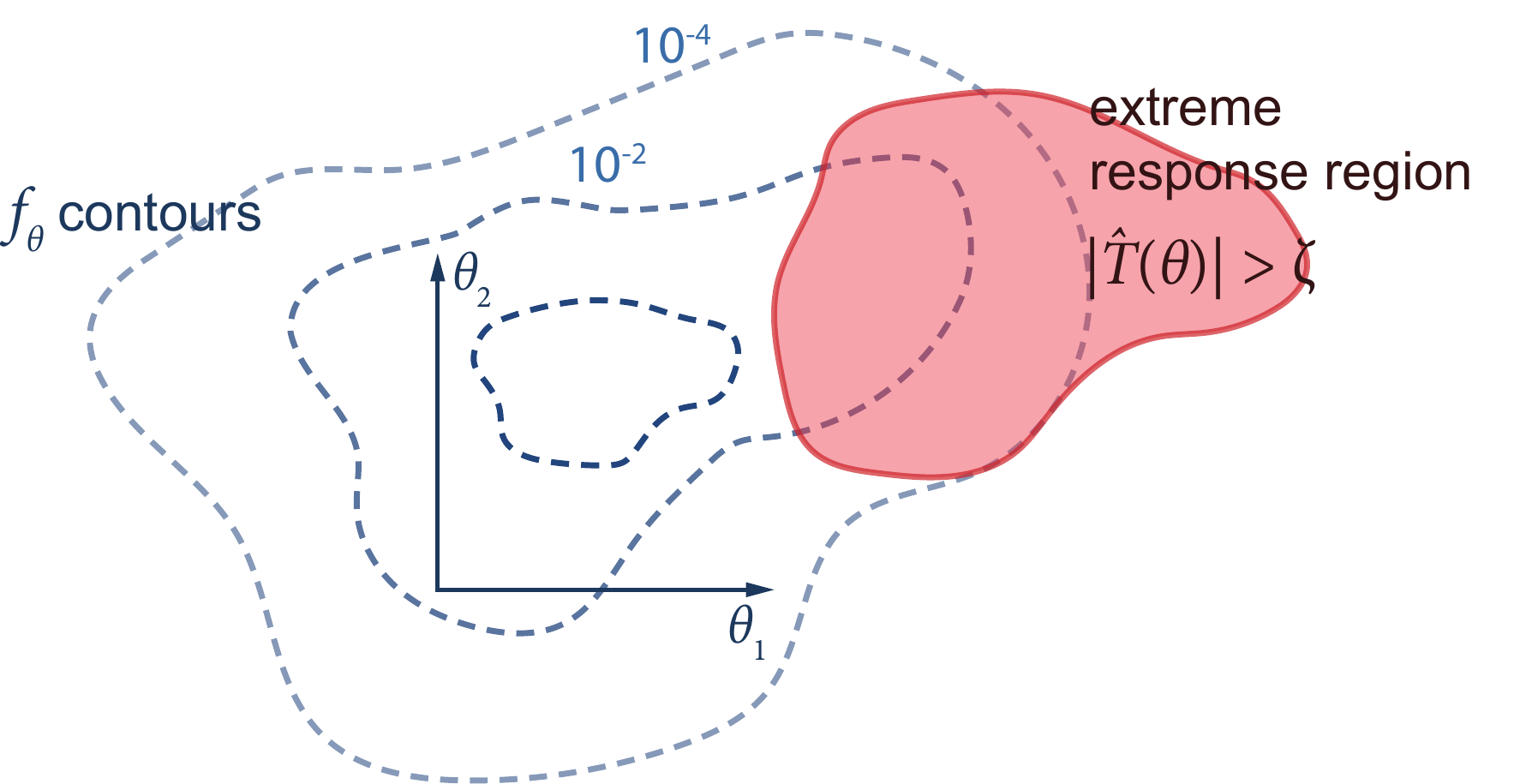}
    \caption{Areas with large probability in $\theta$ are not necessarily associated with regions where $\lvert{\hat T}\rvert $ is large. The proposed criterion focuses on sampling regions where both the  probability  and the   magnitude of~$\lvert{\hat T}\rvert$ are significant.}
    \label{fig:map_overview}
\end{figure}

\section{Method description}\label{sec:algorithm}

An important component of our proposed method is the construction of an inexpensive   surrogate for the map $\hat T$. We utilize  Gaussian process regression (GPR)   as our learning method. GPR   considers the   function  as a Gaussian process, in terms of a distribution in function space  (see \cref{sec:overview_of_gaussian_process_regression} and  numerous references, such as~\cite{Rasmussen2005}). An important property of GPR is that the  posterior distribution is  a Gaussian process with  an explicit mean and kernel function. The variance of the posterior  can   be used as a proxy for the error or uncertainty of the  prediction, which we utilize  along with $f_\theta$ to   guide  selection of the next sample point. 

We learn the  parameter-to-observation map  $\hat T \colon U \subset \mathbb R^m \to \mathbb{R}$  using GPR  from the  dataset $\mathcal D_{n-1} = \{(\hat\theta_i, \hat T(\theta_i))\}_{i=1}^{n-1}$.   The GPR method    then provides analytic  expressions  for the mean $T_{n-1}(\theta)$ and kernel $k_{n-1}(\theta,\theta') $ for the posterior distribution  of the Gaussian random function $\boldsymbol T_{n-1}(\theta) \sim \mathrm{GP}(T_{n-1}(\theta), k_{n-1}(\theta,\theta'))$ (see~\cref{sec:overview_of_gaussian_process_regression} for the expressions). 
We can then construct the following estimate for the pdf $\hat f$  using the posterior mean $T_{n-1}$ of the learned  surrogate model $\boldsymbol T_{n-1}$:
\begin{equation}\label{eq:itercdfs}
    f_{n-1}(s)= \frac{dF_{n-1}}{ds}= \frac{d}{ds}\int_{A_{n-1}(s)}f_\theta(\theta)\,d\theta,
\end{equation}
where $A_{n-1}(s) = \{\theta \in U \colon T_{n-1}(\theta) \leq s \}$ and  $F_{n-1}$ is the    cumulative distribution function.

We now formulate the optimization problem  for the next sample point $\theta^*$. Consider the augmented  $\theta$-parameterized dataset $ \widetilde{\mathcal D}_n(\theta) = \mathcal D_{n-1} \cup \{(\theta,T_{n-1}(\theta))\} $, which  approximates $D_n(\theta) =  \mathcal D_{n-1} \cup \{(\theta,\hat T(\theta))\}$ by using the GPR  mean $T_{n-1}$  instead of  $\hat T$.  Then let $\widetilde{\boldsymbol T}_n(\theta;\theta^*) \sim \mathrm{GP}(\widetilde T_n(\theta;\theta^*) , \widetilde k_n(\theta,\theta';\theta^*)) $ denote the   random function trained on the augmented dataset $\widetilde{\mathcal D}_n(\theta^*)$. The pdf of the random variable  $\hat T(\theta)$, where $\theta \sim f_\theta$, is now replaced by a  $\theta^*$-parameterized random probability measure induced by the Gaussian random function  $\widetilde{\boldsymbol T}_n(\theta;\theta^*)$, where $\theta \sim f_\theta$ and $\theta^*$ is a sample point. Note, that the mean $\widetilde{T}_n(\theta;\theta^*)$ of $\widetilde{\boldsymbol T}_n(\theta;\theta^*)$ in the $\theta$-parameterized dataset is   identical to $T_{n-1}(\theta)$  for all $\theta^*$, since the prediction of the value of the map at the sample point is  given by  the posterior mean  $T_{n-1}(\theta)$ at iteration $n-1$.  

The proposed criterion $Q$ is then based on minimization of a distance metric between the pdfs of the   confidence  bounds of   $\boldsymbol{T}_{n}$. Specifically, let $\tilde f^{\pm}_n(\cdot;\theta^*)$ denote the pdfs of the two  random variables  $\widetilde{T}_n(\theta;\theta^*) \pm \alpha \widetilde\sigma_n(\theta;\theta^*)$, where $ \widetilde \sigma_n(\theta) =  \widetilde k_n(\theta,\theta)$, which are the  upper and lower bounds of the confidence interval  based on the $\alpha$-scaled standard deviation of the posterior distribution. The pdfs  corresponding to the confidence bounds are explicitly given by   
\begin{align}\label{eq:lbpdf}
    \tilde f^{\pm}_n(s;\theta^*) = \frac{d }{ds}\widetilde F_n^\pm{}(s;\theta^*)  = \frac{d}{ds}\int_{A^\pm_{n}(s;\theta^*)}f_\theta(\theta)\,d\theta,
\end{align}
where $A^\pm_{n}(s) = \{ \theta\in U\colon \widetilde{T}_n(\theta;\theta^*) \pm \alpha \widetilde\sigma_n(\theta;\theta^*)\leq s \}$.  We   employ  the 95\% interval bounds, so that the standard deviation is scaled by a factor $\alpha = 1.96$. The cdf $\tilde F^+$ corresponding to  the upper confidence bound $\widetilde{T}_n(\theta;\theta^*) + \alpha \widetilde\sigma_n(\theta;\theta^*)$, is a lower bound for $\widetilde{F}_n$, and we have the relation $\widetilde{F}^+(s) \leq \widetilde{F}(s) \leq \widetilde{F}^-(s)$, for all $s$.  Note, although the map mean  of the Gaussian random function  $\widetilde{\boldsymbol T}_n(\theta;\theta^*)$ based on  the $\theta$-parameterized dataset  is identical to   the posterior mean  $T_{n-1}(\theta)$ at iteration $n-1$, the value of the variance $\widetilde \sigma_n(\theta ;\theta^*)$  now vanishes at the sample point: $\widetilde \sigma_n(\theta^*;\theta^*)=0$.

The  distance metric we propose for the selection of the next sample point is given by
\begin{equation}\label{eq:criterion}
   Q(\tilde f_n^+(\cdot;\theta), \tilde f_n^-(\cdot;\theta))  \triangleq  \frac{1}{2}\int \bigl\lvert{\log( \tilde f_{n}^-(s;\theta)) -\log (\tilde  f_{n}^+(s;\theta)}) \bigr\rvert \,ds,
\end{equation}
where the  integral is computed over the intersection of the two domains that the pdfs    are defined over. The next sample point is then determined by solving the optimization problem:
\begin{equation}    
\theta_n = \argmin_\theta  Q(\tilde f^+(\cdot;\theta), \tilde f^-(\cdot;\theta)) .
\end{equation} 
 This is a $L_1$ based metric of the logarithmic transform of the pdfs. The logarithmic transform of the densities in the criterion  effectively emphasizes extreme and rare events, as we explicitly show in Theorem~\ref{thm:main}.  The   computational savings comes from  the construction of  a criterion $Q$ that avoids $\hat f$  and instead uses $\tilde f_{n}^\pm$,  which  involves evaluating  the GPR emulator $T_{n-1}$ (inexpensive) and an additional GPR prediction.

We make a few comments on the optimization problem and the  sequential algorithm. 
The sequential strategy  is summarized in pseudocode in~\cref{sec:algorithm_pseudocode} For the optimization problem, we  utilize a derivative free method, specifically a particle swarm  optimizer.  The integrations for   the  pdfs are computed explicitly from the definition of the Lebesgue integral, by partitioning the co-domain of map $T$. This is much more efficient compared with a Monte-Carlo approach that would result in a very expensive computational task, as long as the dimensionality of the parameter space is low. For high-dimensional parameter spaces, the computational cost of the integration can become prohibitive and in such cases   an   order reduction  in the parameter space should be first attempted, or  alternatively an  importance sampling algorithm  could be used to compute the integral. In the numerical problems, the optimization problem is practically solvable for low dimensional  parameter spaces $\mathcal{O}(5)$. The starting design plan size $n_s$, if not already provided,  should be small;  a Latin hypercube based sampling plan should be utilized for this purpose. We also recommend  as a pre-training period to process a small number of  iterations using a metric without the logarithmic transform of the densities, such as the following $L_2$ based  metric
\begin{equation}
   Q'(\tilde f_n^+(\cdot;\theta), \tilde f_n^-(\cdot;\theta))  \triangleq  \frac{1}{2}\int \bigl\lvert{ \tilde f_{n}^-(s;\theta)  - \tilde  f_{n}^+(s;\theta)}  \bigr\rvert^2 \,ds,
\end{equation}
in order to  capture the main probability  mass (low order moments) before utilizing the proposed metric that emphasizes extreme and rare events. In addition, it is not necessary to retrain the GPR hyperparameters after every iteration, which can remain fixed after being calibrated from a few iterations. Updating the Gaussian process emulator after the addition of new data points can be done in $\mathcal{O}(n^2)$ if the hyperparameters are fixed, otherwise the GPR emulator must be performed anew in $\mathcal{O}(n^3)$ (see~\cref{sec:gpupdate})\footnote{For low-dimensional $\theta$, since we presume the dataset size is small, the cost difference may be negligible.}. 

\subsection{Asymptotic behavior}
The first theoretical result relates to the  convergence of the proposed method to the true pdf as the number of samples goes to infinity (see~\cref{sec:theory} for the proof). The second result  shows that the  asymptotic form of the criterion is given by (see~\cref{sec:theory} for the proof):
\begin{theorem}\label{thm:main}
    Let $T_{n}(\theta)$ and $\sigma_{n}(\theta)$ from the GPR scheme be sufficiently smooth functions of $\theta$. The   asymptotic behavior of ${Q}$ for large $n$ (ensuring small $\sigma$) and small $\lVert \nabla \sigma\rVert/\lVert \nabla  T\rVert$   is given by
    \begin{align} 
        \widetilde{Q}_n  &\triangleq \frac{1}{2}\int
        \bigl \lvert \log f_{n}^+(s)- \log f_{n}^-(s)\bigr\rvert \, ds\\
        &\approx \int \biggl \lvert \frac{\frac{d}{ds}\mathbb{E}(\sigma _n(\theta) \cdot \boldsymbol 1_{T_{n}(\theta) =       s})}{f_{n}(s)} \biggr\rvert\, ds,\label{eq:symps}
    \end{align}
where $\mathbb{E}$  denotes the   expectation over the probability measure $\mathbb{P}_\theta$.
\end{theorem}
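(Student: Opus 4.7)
The plan is to show both sides reduce to the same quantity after two linearizations: first, a Taylor expansion of the cdfs $\widetilde F^{\pm}_n$ in the small parameter $\alpha\sigma_n$, and second, a linearization of the log of a ratio around $1$. The hypotheses of the theorem are precisely what make these two expansions legitimate: $\sigma_n$ being small (large $n$) lets us expand in $\alpha\sigma_n$, while the smallness of $\lVert\nabla\sigma_n\rVert/\lVert\nabla T_n\rVert$ guarantees that the level sets of $T_n\pm\alpha\sigma_n$ are well-behaved perturbations of the level sets of $T_n$ so that coarea-type formulas apply with $\lVert\nabla(T_n\pm\alpha\sigma_n)\rVert\approx \lVert\nabla T_n\rVert$.

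Concretely, I would first rewrite
\begin{equation*}
F_n^{\pm}(s)=\mathbb{P}\bigl(T_n(\theta)\pm\alpha\sigma_n(\theta)\le s\bigr)=\mathbb{E}\bigl[\mathbf{1}_{T_n(\theta)\le s\mp\alpha\sigma_n(\theta)}\bigr],
\end{equation*}
and then expand the indicator distributionally in $\alpha\sigma_n$ about $\mathbf{1}_{T_n\le s}$; the first-order correction is $\mp\alpha\sigma_n(\theta)\,\delta(T_n(\theta)-s)$. Taking expectations gives
\begin{equation*}
F_n^{\pm}(s)\;\approx\;F_n(s)\mp\alpha\,\mathbb{E}\bigl[\sigma_n(\theta)\,\delta(T_n(\theta)-s)\bigr],
\end{equation*}
and the remainder is $O(\alpha^2\sigma_n^2)$ provided the coarea formula applies to $T_n$, which in turn needs $\lVert\nabla T_n\rVert$ bounded away from zero on the relevant level set, hence the smoothness assumption. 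Differentiating in $s$ yields $f_n^{-}(s)-f_n^{+}(s)\approx 2\alpha\frac{d}{ds}\mathbb{E}[\sigma_n(\theta)\,\delta(T_n(\theta)-s)]$, which is exactly $2\alpha\frac{d}{ds}\mathbb{E}[\sigma_n(\theta)\,\mathbf{1}_{T_n(\theta)=s}]$ in the notation of the statement (interpreting $\mathbf{1}_{T_n=s}$ in the density sense via $\frac{d}{ds}\mathbb{E}[\sigma_n\mathbf{1}_{T_n\le s}]$).

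For the second linearization, since both $f_n^{\pm}$ lie within $O(\alpha\sigma_n)$ of the common density $f_n$, I would write $f_n^{\pm}=f_n(1+\epsilon^{\pm})$ with $\epsilon^{\pm}$ small and apply $\log(1+x)\approx x$ to obtain
\begin{equation*}
\log f_n^{-}(s)-\log f_n^{+}(s)\;\approx\;\frac{f_n^{-}(s)-f_n^{+}(s)}{f_n(s)}.
\end{equation*}
Substituting the earlier expression and absorbing the constant $\alpha$ yields the claimed asymptotic form of $\widetilde{Q}_n$. The factor $\tfrac12$ in the definition of $\widetilde{Q}_n$ cancels against the factor of $2$ coming from $F_n^{-}-F_n^{+}$.

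The main obstacle is giving rigorous meaning to the distributional Taylor expansion of $\mathbf{1}_{T_n\le s-\alpha\sigma_n}$ and the corresponding remainder estimate, which essentially requires a careful coarea argument on a tubular neighborhood of the level set $\{T_n=s\}$: one needs $\nabla T_n\neq 0$ there, and the measure of the symmetric difference between $\{T_n+\alpha\sigma_n\le s\}$ and $\{T_n-\alpha\sigma_n\le s\}$ must be controlled by the density $f_\theta$ weighted against $\sigma_n/\lVert\nabla T_n\rVert$. The secondary obstacle is the log-linearization: in the tails $f_n$ may be small and $\epsilon^{\pm}$ need not be uniformly small, so the approximation is only valid on sets where $\alpha\sigma_n \ll f_n/\lvert\nabla f_n\rvert$; the theorem must therefore be read as a leading-order asymptotic over the bulk of the support, with tail contributions justified by the assumption that $\sigma_n\to 0$ and that $T_n$ varies faster than $\sigma_n$.
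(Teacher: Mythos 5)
Your proposal is correct and follows essentially the same route as the paper: the paper computes $F^+(s)-F^-(s)$ as the $f_\theta$-measure of the thin shell $\{\theta: T(\theta)\in(s-\sigma,s+\sigma)\}$ via an explicit surface integral with normal displacements $h^\pm \approx \mp\sigma/((\nabla T\pm\nabla\sigma)\cdot\boldsymbol n)$ (which is exactly where the $\lVert\nabla\sigma\rVert/\lVert\nabla T\rVert$ error enters), arriving at the same leading term $2\,\mathbb{E}[\sigma\,\delta(T-s)]$ that your distributional expansion of the indicator produces, and then applies the identical $\log(1+x)\approx x$ linearization with the same cancellation of the factor $\tfrac12$. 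The caveats you flag (non-vanishing $\nabla T$ on the level sets, breakdown of the log-linearization where $f_n$ is small) are genuine and are likewise left informal in the paper's own argument.
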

Note that the pdf in the denominator under the integral in~\ref{eq:asymp_result} is a direct implication of our choice to consider the difference between the logarithms of the pdfs in the optimization criterion. The pdf of the parameter-to-observation map   $f_n$ in the denominator of the integrand  guarantees that even values of the parameter $\theta$, where the probability is   low (rare events) are   sampled. This point is clearly demonstrated  by the following  corollary (proof in~\cref{sec:theory}).

\begin{corollary}
 Let $\theta \colon \Omega \to [u_{1}, u_{2}] \subset \mathbb{R}$ be a one-dimensional random variable and in addition to the assumptions of Theorem~\ref{thm:main} we assume that $T_n'(\theta)$ is monotonically increasing function. Then, the asymptotic value of $\widetilde{Q}_n$  for   large $n$ has the following property:
    \begin{equation} 
        \widetilde{Q}_n \gtrsim  \biggl\lvert \int_U \sigma_{n}(\theta) \, d(\log f_{\theta}(\theta))
        + \int_U\sigma_n(\theta) \, d(\log T'(\theta)) +  \sigma_n(u_2)-\sigma_n(u_1) \biggl\rvert .
    \end{equation}
\end{corollary}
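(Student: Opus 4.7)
The plan is to specialize the asymptotic formula of Theorem~\ref{thm:main} to the one-dimensional setting, convert the $s$-integral to a $\theta$-integral via the change of variables induced by $T_n$, and then move the absolute value outside the integral using the elementary bound $\int|h|\ge|\int h|$. Because $T_n'(\theta)$ is monotonically increasing and (implicitly) positive, $T_n$ is a strictly increasing diffeomorphism of $[u_1,u_2]$, so the change of variables is clean and the pushforward density has the explicit form $f_n(s)=f_\theta(T_n^{-1}(s))/T_n'(T_n^{-1}(s))$.

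The key computation is the ratio appearing in Theorem~\ref{thm:main}. Interpreting the formal object $\frac{d}{ds}\mathbb{E}(\sigma_n(\theta)\boldsymbol 1_{T_n(\theta)=s})$ as the derivative in $s$ of the $\sigma_n$-weighted density $s\mapsto\sigma_n(T_n^{-1}(s))f_n(s)$ (obtained by differentiating the $\sigma_n$-weighted CDF $\int_{\{T_n\le s\}}\sigma_n f_\theta\,d\theta$), and using $(\log f_n)'(s)=(T_n'(\theta))^{-1}[(\log f_\theta)'(\theta)-(\log T_n')'(\theta)]$ with $\theta=T_n^{-1}(s)$, a short chain-rule manipulation reduces the integrand to
\[
\frac{1}{T_n'(\theta)}\Bigl[\sigma_n'(\theta)+\sigma_n(\theta)(\log f_\theta)'(\theta)-\sigma_n(\theta)(\log T_n')'(\theta)\Bigr].
\]
The change of variables $ds=T_n'(\theta)\,d\theta$ then cancels the $1/T_n'$ factor, yielding
\[
\widetilde Q_n\approx\int_{u_1}^{u_2}\Bigl|\sigma_n'(\theta)+\sigma_n(\theta)(\log f_\theta)'(\theta)-\sigma_n(\theta)(\log T_n')'(\theta)\Bigr|\,d\theta.
\]

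From here I would apply $\int|h|\ge|\int h|$ to pull the absolute value outside, and then identify the three terms as the Stieltjes integrals $\int_U\sigma_n\,d(\log f_\theta)=\int\sigma_n(\log f_\theta)'\,d\theta$ and $\int_U\sigma_n\,d(\log T_n')=\int\sigma_n(\log T_n')'\,d\theta$, together with the boundary contribution $\int_{u_1}^{u_2}\sigma_n'\,d\theta=\sigma_n(u_2)-\sigma_n(u_1)$ from the fundamental theorem of calculus. The monotonicity of $T_n'$ ensures $(\log T_n')'\ge 0$, which can be used to control the sign of the second Stieltjes integral and recover the combination displayed in the statement.

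The main obstacle is making the asymptotic notation in Theorem~\ref{thm:main} precise enough to justify the chain-rule and change-of-variables steps rigorously — in particular, giving meaning to the formal density $\frac{d}{ds}\mathbb{E}(\sigma_n\boldsymbol 1_{T_n=s})$ via a co-area / disintegration argument in the regime of small $\sigma_n$ with small $\lVert\nabla\sigma_n\rVert/\lVert\nabla T_n\rVert$, and in carefully tracking the signs produced by the $\pm\alpha\sigma_n$ perturbations so that, after the outer absolute value is applied, the three terms combine exactly as in the statement.
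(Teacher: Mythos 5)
Your proposal follows essentially the same route as the paper's proof: specialize Theorem~\ref{thm:main} to one dimension, change variables via $\theta = T_n^{-1}(s)$ using the assumed monotonicity, reduce the integrand to $\sigma_n' + \sigma_n(\log f_\theta)' - \sigma_n(\log T_n')'$, pull the absolute value outside with $\int\lvert h\rvert \ge \lvert\int h\rvert$, and identify the three resulting terms as the two Stieltjes integrals plus the boundary contribution. The sign question you flag on the $\int_U \sigma_n\, d(\log T_n')$ term is real, but it is present in the paper's own derivation as well (which obtains $-\sigma T''/T'$ in the intermediate step yet states the result with a plus sign inside the outer absolute value), so your treatment is at least as careful as the original.
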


Therefore for large value of $n$, $\widetilde Q_n$ bounds (within higher order error) a quantity that consists of boundary terms and two integral terms involving sampling the function $\sigma(\theta)$ over the contours of $\log f_{\theta}(\theta)$ and $\log T'_{n}(\theta)$. Consider the first term on the right, which can be discretized as:
\begin{equation}
    \int_U \sigma(\theta) \, d({\log f_{\theta}(\theta)}) = \lim _{N\to\infty}\sum_{i=1}^N \Delta z  \!\!\!\!\sum_{\{ \theta \colon \log f_{\theta}(\theta) = z_i\}} \sigma(\theta),
\end{equation}where $\{z_i\}^N_{i=1}$ is an equipartition of the range of $\log f_{\theta}$. This summation is summarized in Fig.~\ref{fig:integral} (left). The way the integral is computed guarantees that the integrand $\sigma_{n}(\theta)$  will be sampled even in locations where the pdf $f_{\theta}$ has  small value (rare events) or else the criterion would not converge.

On the other hand, if we had instead chosen a criterion that focused directly on the convergence of low order statistics for $\sigma_{n}(\theta)$, such as the same criterion, but \emph{without the logarithmic transformation of the densities},  we would have
\begin{align}
    \widetilde{Q}'_{n} &\triangleq \frac{1}{2}\int \lvert f^+_n(s) -f^-_n(s)\rvert \,ds \nonumber \geq \biggr\lvert \int_U \sigma_{n}(\theta)   \, d(F_\theta(\theta)) \biggr\rvert,
\end{align}
where $F_\theta$ is the cumulative distribution function of $\theta$. The corresponding integration is shown in Fig.~\ref{fig:integral} (right). In such case, sampling the function $\sigma_{n}(\theta)$ in regions of high probability would be sufficient for the criterion to converge. However, such a strategy would most likely lead to large errors in regions associated with rare events since the sampling will be sparse.
\begin{figure}[htb]
    \centering
    \includegraphics[width=0.7\linewidth]{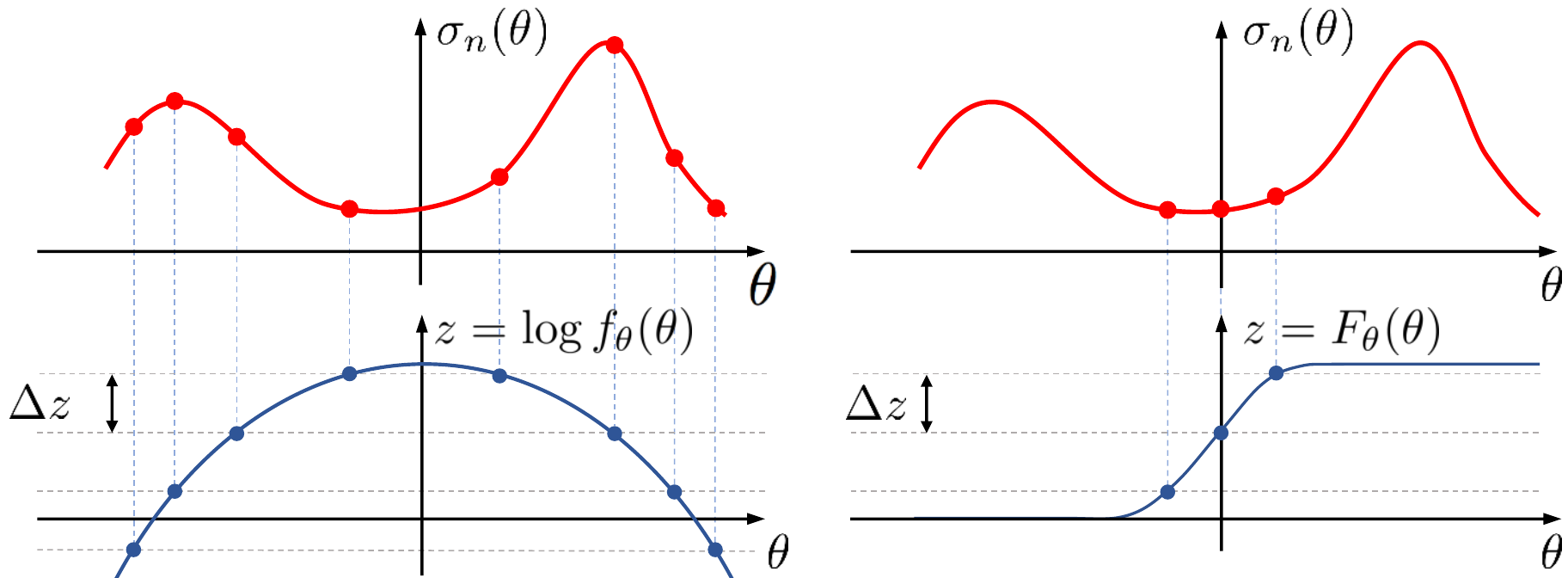}
    \caption{(left) Integration of $\sigma_{n}(\theta)$ over contours of $\log f_{\theta}(\theta)$  implies sampling of $\sigma_{n}(\theta)$   in low probability regions of $\theta$. (right) On the other hand, low-order moments  of $\sigma_{n}(\theta)$  rely only on values of $\sigma_{n}(\theta)$ close to high probability regions of $\theta$, thus rare events are not sampled sufficiently.}
    \label{fig:integral}
\end{figure}

\section{Applications}\label{sec:example_problems}

We illustrate the proposed algorithm to two problems. The first example consists of a nonlinear oscillator stochastically forced by a colored noise process; this application, serves to illustrate the main ideas of the proposed method. The second application, involving three-dimensional  hydrodynamic wave impacts on an offshore platform (a system with millions of degrees of freedom)   showcases the  applicability of the proposed method to real world setups where computation of extreme event statistics using traditional approaches are prohibitively expensive, since the simulation times of  experimental runs are on the order of several hours.

\subsection{Nonlinear oscillator driven by correlated stochastic noise}

Consider the nonlinear oscillator,
\begin{equation}\label{eq:oscillator}
    \ddot x + \delta \dot x + F(x) = \zeta(t),
\end{equation}
forced by a stationary, colored noise with correlation function $C(\tau) = \sigma_\zeta^2 e^{-\tau^2/2\ell_\zeta^2}$ and   the nonlinear restoring term given by
\begin{equation}
    F(x) = \begin{cases}
        \alpha x,                       & 0 \leq \lvert {x} \rvert \leq x_1 \\
        \alpha x_1 ,                    & x_1 < \lvert {x} \rvert \leq x_2  \\
        \alpha x_1 + \beta (x - x_2)^3, & x_2 \leq \lvert {x} \rvert.       \\
    \end{cases}
\end{equation}
Since the system is stochastically forced  it is necessary to use an expansion to obtain a parameterization in terms of a finite number of random variables. We use a Karhunen-Loève expansion (see~\cref{sec:klparam}) to obtain
\begin{equation}\label{eq:dynamoscc}
    \ddot x(t) + \delta \dot x(t) + F(x(t)) = \sum_{i=1}^m \theta_i(\omega) e_i(t), \quad t\in[0, T],
\end{equation}
which is truncated to a suitable number $m$. For illustration, we take our quantity of interest as the average value of the response, so that the parameter-to-observation map is defined by $  T(\theta) \triangleq  \overbar{x(t; \theta)} = \frac{1}{T} \int_0^T x(t;\theta) \, dt.$ 

We consider a three term truncation $m=3$. The system parameters are given by $\delta = 1.5$, $\alpha = 1.0$, $\beta = 0.1$, $x_1 = 0.5$, $x_2 = 1.5$ and the forcing parameters are $\sigma_\zeta = 4$ and $\ell_\zeta = 0.1$, with $t\in[0,25]$. 
For comparisons the exact pdf is obtained by sampling the true map from $64000$ points on a $40\times40\times40$ grid. 
 In Fig.~\ref{fig:n3alg} we illustrate the sampling as determined by the proposed algorithm in addition to the $L_1$ log error between the exact  pdf and the GPR mean prediction. In these simulations we start from a dataset of $6$ points selected according to a Latin-Hypercube (LH) design. In order to   capture the main mass of the pdf, before focusing on the tails of the distribution, we perform $12$ iterations using the $d_{L_2}$ error metric before moving on to the criterion using the $d_{L_1}$ error of the logarithms of the pdfs. Observe the unique shape that the sampling algorithm has identified in $\theta$ space, which   spans regions in $\theta$ associated  with finite probability and large values of $q$. Fig.~\ref{fig:n3alg_prog} demonstrates the progression of the estimated pdf as a function of the iteration count. Even after only $100$ samples we have already captured the qualitative features of the exact pdf and have very good quantitative agreement. 
\begin{figure}[tbhp!]
    \centering
     \includegraphics[scale=0.75]{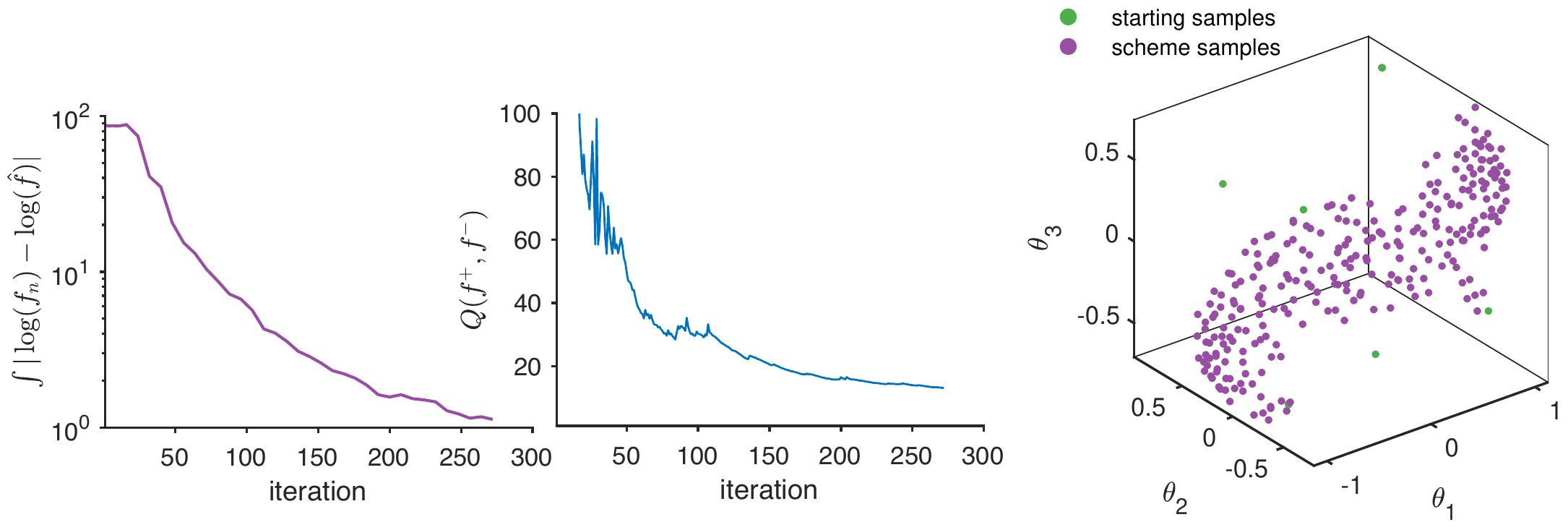}
    \caption{(Right) A scatter plot of the algorithm sampling of the parameter space (green points denote the initial random LH samples). (Left) The corresponding $L_1$ error of the logarithmic transform of the  pdf between the GPR mean and truth and (middle) value of the  criterion in~\ref{eq:lbpdf} as a function of the iteration number.}
    \label{fig:n3alg}
\end{figure}

We have  explored the convergence properties of the algorithm and in Fig.~\ref{fig:n2comparison} we compare the proposed sampling method to space-filling Latin Hypercube sampling. The LH strategy is not iterative and thus must be started anew, which puts the LH sampling at a large disadvantage. Nonetheless, this serves as  a benchmark to a widely used  reference method  for the design of  experiments  due to its simplicity. In the figure, the purple curve represents the mean LH design error and the shaded region represents the standard deviation about the mean, which are computed by evaluating 250 number of random LH designs per fixed dataset size.  Even considering the variance of the LH curve, the proposed algorithm under various parameters (initial dataset size or number of `core' iterations where the $Q$ criterion uses the $L_2$ metric) is observed to outperform the LH strategy by nearly an order of magnitude in the $L_1$ error of the logarithm of the pdfs. This demonstrates the favorable properties of the proposed sampling strategy for accurately estimating the tail of target distribution.  
\begin{figure}[tbhp!]
    \centering
    \includegraphics{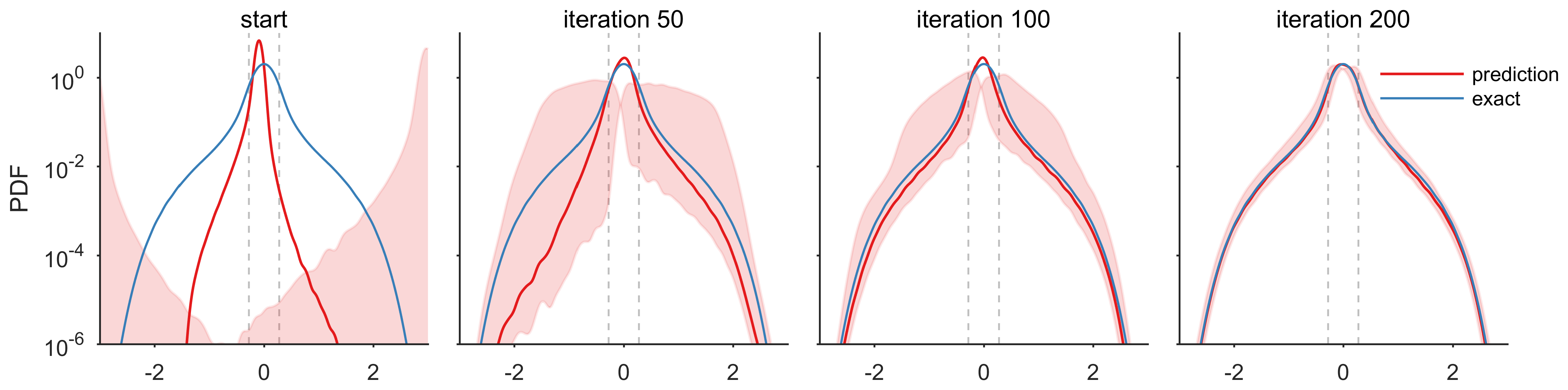}
    \caption{Progression of the pdf estimation as we iteratively sample more points. We shade the region between the pdfs $f^\pm$ in red purely  as a visualization of the convergence of the pdfs $f^\pm$ as more points are sampled.  Dashed vertical lines denote one standard deviation.}
    \label{fig:n3alg_prog}
\end{figure}
\begin{figure}[tbhp!]
    \centering
    \includegraphics[scale=1.25]{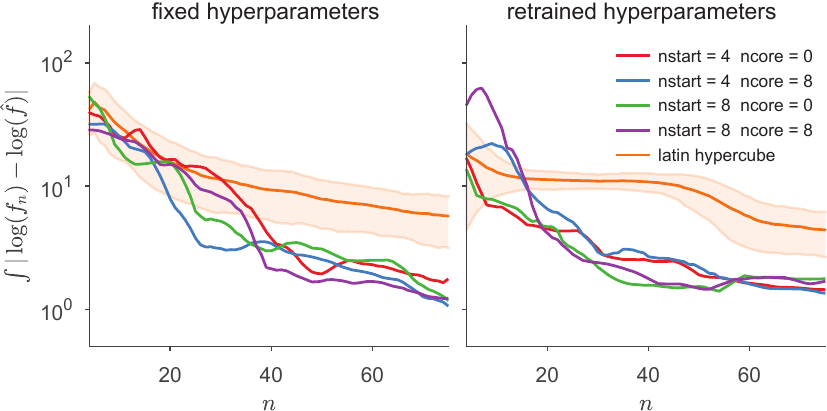}
    \caption{Comparison of the errors between LH sampling and proposed algorithm under different parameters measured against the exact pdf for the case where $m=2$ in~\ref{eq:dynamoscc}. The parameter \texttt{ncore} is the number of `core' iterations performed according to a $L_2$ metric and \texttt{nstart} is initial dataset size (where the points are sampled from an LH design). }
    \label{fig:n2comparison}
\end{figure}

\subsection{Hydrodynamic forces and moments on an offshore platform}

Here we apply the sampling algorithm to compute the probability distributions describing the loads on an offshore platform in irregular seas. The response of the platform is quantified through direct, three-dimensional numerical simulations of Navier-Stokes utilizing the smoothed particle hydrodynamics (SPH) method~\cite{Crespo2015} (Fig.~\ref{fig:sphrend}).  
Our numerical setup parallels that of a physical wave tank experiment and consists of a wave maker on one end and a sloping `beach’ on the other end of the tank to quickly dissipate the energy of incident waves and avoid wave reflections. Further details regarding the simulations are provided in~\cref{sec:offshore}.

Wind generated ocean waves are empirically described by their energy spectrum. Here, we consider irregular seas with JONSWAP spectral density (see~\cref{sec:offshore} for details and parameters). While realizations of the random waves have the form of time series, an alternative description can be obtained by considering a sequence of primary wave groups, each characterized by a random group length scale $L$ and height $A$ (see e.g. ~\cite{Cousins2015}). This formulation allows us to describe the input space through just two random variables (much fewer than what we would need with a Karhunen-Loeve expansion). Following~\cite{Cousins2015} we describe these primary wavegroups by the representation $u(x) = A\sech(x/L),$ which is an explicit parameterization in terms of $L$ and $A$. Thus, $L$ and $A$ correspond to $\theta_1$ and $\theta_2$ in the notation of Eq.~\cref{sec:problem_setup}. The statistical characteristics of the wave groups associated with a random wave field (such as the one given by the JONSWAP spectrum) can be obtained by applying the scale-selection algorithm described in~\cite{Cousins2016}. Specifically, by generating many realizations consistent with the employed spectrum we use a group detection algorithm to identify coherent group structures in the field along with their lengthscale and amplitude ($L$ and $A$). This procedure  provides us with the empirical probability distribution $f_\theta$ of the wave field and thus a nonlinear parametrization of the randomness in the input process.
\begin{figure}[tbhp!]
    \centering
    \includegraphics[width=0.6\linewidth]{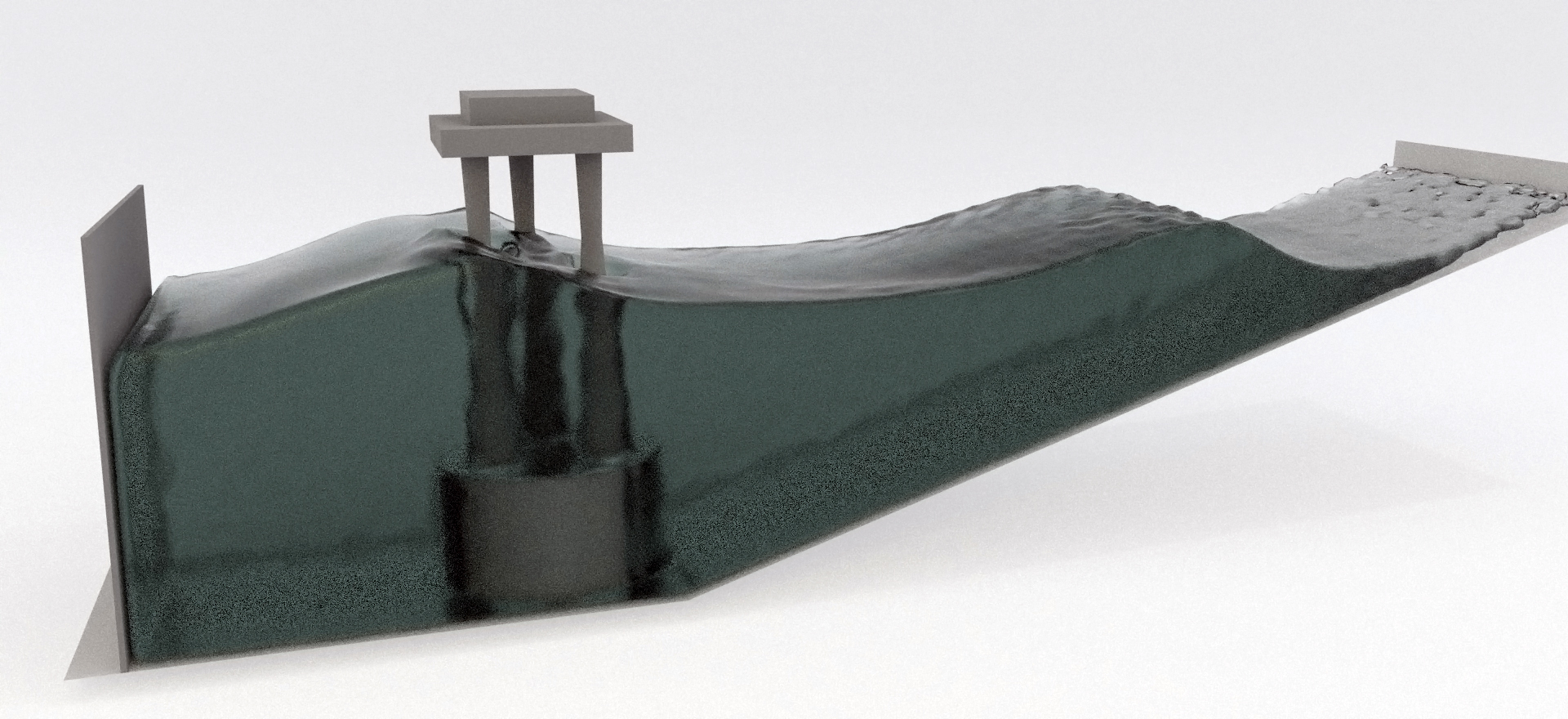}
    \caption{SPH simulation at $t= \SI{103.5}{\second}$ with $\theta_1 = 4.63$ and $\theta_2 = 0.662$.}
    \label{fig:sphrend}
\end{figure}
The quantities of interest in this problem are the forces and moments acting on the platform. The incident wave propagates in the $x$ direction and as such we consider the pdf of the force in the $x$ direction $F_x$ and the moment $M_y$ about the bottom-center of the platform: 
\begin{equation}
q_{f} =  \max_{t\in[0,T]} \lvert{ F_x(t) \rvert} \quad \text{and} \quad q_{m}  =  \max_{t\in[0,T]} \lvert{ M_y(t) \rvert}.
\end{equation}

In Fig.~\ref{fig:sphrend} we show the results of the progression of density prediction for the force variable. In these experiments we begin by arbitrary selecting $4$ initial sample points from a Latin Hypercube sampling strategy. Next, we perform $4$ iterations using the $L_1$ distance metric to quickly capture the main mass of the distribution before focusing on the distribution away from the mean that utilizes the $L_1$ metric of the logarithmic of the pdf.   The lightly shaded red region in the pdf plots is a visualization of the uncertainty in the pdf, obtained by sampling the GPR prediction and computing the pdf for $200$ realizations and then computing the upper (lower) locus of the maximum (minimum) value of the pdf at each value. The figures demonstrate that with $15$ (i.e $14$ total sample points) iterations (together with the $4$ samples in the initial configuration) we are able to approximate the pdf to good agreement with the `exact' pdf, which was computed from a densely sampled grid. In~\cref{sec:offshore} we also present the sampled map where it can be seen that the algorithm selects points associated with large forces and non-negligible probability of occurrence. In the same figures results for the momentum and an additional spectrum are included.  Note, for this problem the GP regression  operates on the logarithm of the observable because the underlying function is always positive.
\begin{figure}[bthp!]
    \centering
    \includegraphics{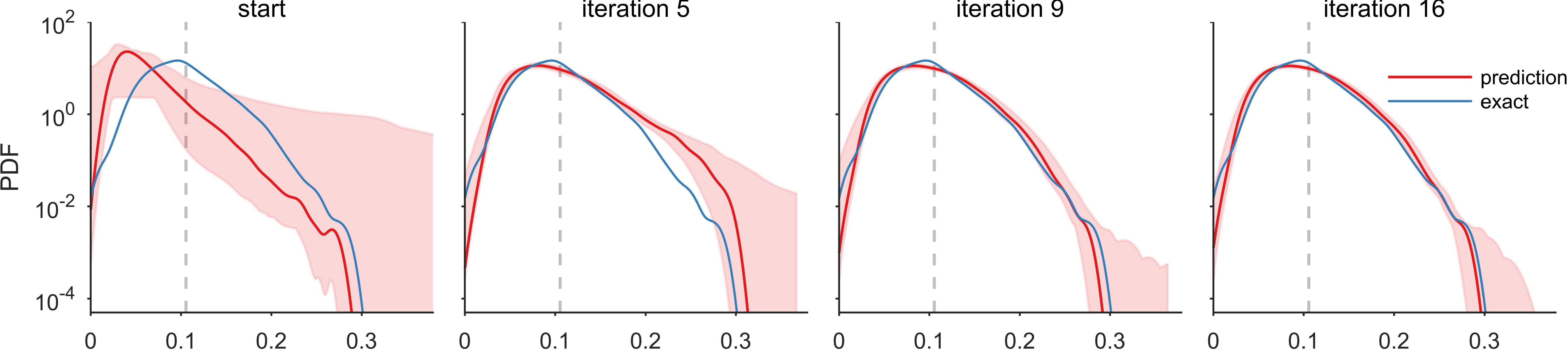}
    \caption{Progression for pdf density prediction for the force variable.}
    \label{fig:result_jonswap1}
\end{figure}

\section{Conclusions}

We developed and analyzed a computational algorithm for the evaluation of extreme event statistics associated with nonlinear dynamical systems that depend on a set of random parameters. The algorithm is practical even for very high dimensional systems but with parameter spaces of moderate dimensionality and it provides a sequence of  points that lead to improved estimates of the probability distribution for a scalar quantity of interest. The criterion for the selection of the next design point emphasizes the tail statistics. We have proven asymptotic convergence of the algorithm and provided  analysis for its asymptotic behavior. We have also demonstrated its applicability through two problems, one of them involving a demanding system with millions degrees of freedom.
\section*{Acknowledgments}
{This work has been supported through the ONR grant N00014-15-1-2381, the AFOSR grant FA9550-16- 1-0231, and the ARO grant W911NF-17-1-0306. We are grateful to the referees for providing numerous suggestions that led to important improvements and corrections. We also thank A. Crespo for helpful comments regarding the SPH simulations.}

\bibliographystyle{plain}
\bibliography{lib}

\begin{thebibliography}{10}

\bibitem{Altomare2015}
Corrado Altomare, Alejandro J.~C. Crespo, Jose~M. Domnguez, Moncho
  Gómez-Gesteira, Tomohiro Suzuki, and Toon Verwaest.
\newblock Applicability of smoothed particle hydrodynamics for estimation of
  sea wave impact on coastal structures.
\newblock {\em Coastal Engineering}, 96:1--12, 02 2015.

\bibitem{Chen2017}
Nan Chen and Andrew~J. Majda.
\newblock Simple stochastic dynamical models capturing the statistical
  diversity of el niño southern oscillation.
\newblock {\em Proceedings of the National Academy of Sciences},
  114(7):1468--1473, 2017.

\bibitem{Cousins2015}
Will Cousins and Themistoklis~P. Sapsis.
\newblock Unsteady evolution of localized unidirectional deep-water wave
  groups.
\newblock {\em Physical Review E}, 91:063204, 2015 06.

\bibitem{Cousins2016}
Will Cousins and Themistoklis~P. Sapsis.
\newblock Reduced-order precursors of rare events in unidirectional nonlinear
  water waves.
\newblock {\em Journal of Fluid Mechanics}, 790:368--388, 02 2016.

\bibitem{Crespo2015}
A.~J.~C. Crespo, J.~M. Domnguez, B.~D. Rogers, M.~Gómez-Gesteira, S.~Longshaw,
  R.~Canelas, R.~Vacondio, A.~Barreiro, and O.~Garca-Feal.
\newblock {DualSPHysics}: Open-source parallel {CFD} solver based on smoothed
  particle hydrodynamics (sph).
\newblock {\em Computer Physics Communications}, 187:204--216, 02 2015.

\bibitem{Davis2005}
Timothy~A. Davis and William~W. Hager.
\newblock Row modifications of a sparse {Cholesky} factorization.
\newblock {\em SIAM Journal on Matrix Analysis and Applications},
  26(3):621--639, 2005.

\bibitem{Dematteis2017}
Giovanni Dematteis, Tobias Grafke, and Eric Vanden-Eijnden.
\newblock Rogue waves and large deviations in deep sea.
\newblock {\em Proceedings of the National Academy of Sciences}, 2018.

\bibitem{Fouque2011}
Jean-Pierre Fouque, George Papanicolaou, Ronnie Sircar, and Knut Sølna.
\newblock {\em Multiscale Stochastic Volatility for Equity, Interest-Rate and
  Credit Derivatives}.
\newblock Cambridge University Press, 09 2011.

\bibitem{Franzke2017}
Christian L.~E. Franzke.
\newblock Extremes in dynamic-stochastic systems.
\newblock {\em Chaos}, 27(1):012101, 2017.

\bibitem{Frigaard1993}
Peter Frigaard, Michael Høgedal, and Morten Christensen.
\newblock Wave generation theory, 06 1993.

\bibitem{Hense2006}
Andreas Hense and Petra Friederichs.
\newblock {\em Wind and precipitation extremes in the {Earth}'s atmosphere},
  pages 169--187.
\newblock Springer Berlin Heidelberg, 2006.

\bibitem{Liu2007}
Paul~C. Liu.
\newblock A chronology of freaque wave encounters.
\newblock {\em Geofizika}, 24(1):57--70, 2007.

\bibitem{Majda2016}
Andrew~J. Majda.
\newblock {\em Introduction to Turbulent Dynamical Systems in Complex Systems}.
\newblock Springer International Publishing, 2016.

\bibitem{Masud2005}
A.~Masud and L.~A. Bergman.
\newblock Solution of the four dimensional {Fokker}-{Planck} equation: still a
  challenge.
\newblock In {\em ICOSSAR}, volume 2005, pages 1911--1916, 2005.

\bibitem{Mohamad2016a}
Mustafa~A. Mohamad, Will Cousins, and Themistoklis~P. Sapsis.
\newblock A probabilistic decomposition-synthesis method for the quantification
  of rare events due to internal instabilities.
\newblock {\em Journal of Computational Physics}, 322:288--308, 10 2016.

\bibitem{Mohamad2015}
Mustafa~A. Mohamad and Themistoklis~P. Sapsis.
\newblock Probabilistic description of extreme events in intermittently
  unstable dynamical systems excited by correlated stochastic processes.
\newblock {\em SIAM/ASA Journal on Uncertainty Quantification}, 3(1):709--736,
  01 2015.

\bibitem{Naess2013}
Arvid Naess and Torgeir Moan.
\newblock {\em Stochastic Dynamics of Marine Structures}.
\newblock Cambridge University Press, 2013.

\bibitem{Nicodemi2012}
Mario Nicodemi.
\newblock {\em Extreme value statistics}, pages 1066--1072.
\newblock Springer New York, 2012.

\bibitem{Papoulis2002}
Athanasios Papoulis and S.~Unnikrishna Pillai.
\newblock {\em Probability, Random Variables and Stochastic Processes}.
\newblock McGraw-Hill Education, 4 edition, 2002.

\bibitem{Pavliotis2014}
Grigorios~A. Pavliotis.
\newblock {\em Stochastic Processes and Applications}, volume~60 of {\em Texts
  in Applied Mathematics}.
\newblock Springer-Verlag New York, 2014.

\bibitem{Rasmussen2005}
Carl~Edward Rasmussen and Christopher K.~I. Williams.
\newblock {\em Gaussian Processes for Machine Learning}.
\newblock Adaptive Computation and Machine Learning. MIT Press, 2005.

\bibitem{Sobczyk1991}
Kazimierz Sobczyk.
\newblock {\em Stochastic Differential Equations}.
\newblock Mathematics and Its Applications (East European Series). Springer
  Netherlands, 1991.

\bibitem{Stuart2016}
Andrew~M. Stuart and Aretha~L. Teckentrup.
\newblock Posterior consistency for gaussian process approximations of bayesian
  posterior distributions.
\newblock {\em ArXiv e-prints}, 2016.

\bibitem{Varadhan1984}
S.~R.~Srinivasa Varadhan.
\newblock {\em Large Deviations and Applications}.
\newblock CBMS-NSF Regional Conference Series in Applied Mathematics. Society
  for Industrial and Applied Mathematics, 1984.

\bibitem{Wendland2004}
Holger Wendland.
\newblock {\em Scattered Data Approximation}, volume~17 of {\em Cambridge
  Monographs on Applied and Computational Mathematics}.
\newblock Cambridge University Press, 2004.

\end{thebibliography}

\appendix

\section{Overview of Gaussian process regression}\label{sec:overview_of_gaussian_process_regression}
An important component of our algorithm is the construction of a surrogate for the map $\hat T$. We utilize the Gaussian process regression (GPR) method for this purpose. A feature of critical importance is that GPR specifies the posterior distribution as a Gaussian random function, with   explicit formulas for the posterior mean and kernel function. The variance  of the posterior can be used as an error or uncertainty estimate of the current prediction, which can in turn be used to guide optimization and explore  parameter space. We briefly provide an overview of GPR since it's a crucial component of our proposed method, but refer to the book~\cite{Rasmussen2005} and numerous other references in the literature  for further details.

We estimate the  parameter-to-observation map, $\hat T(\theta) \colon U \to \mathbb{R}^1,$ with $\ U\in \mathbb{R}^m $, via a GPR scheme from an observed dataset $\mathcal D_n = \{(\theta_i, \hat T(\theta_i))\}_{i=1}^n$, using  $n$ design points. These are the points that we have already sampled. Specifically, to estimate $\hat T$ we place a Gaussian process prior over $\hat T(\theta)$ and consider the function values as a realization of the GP. In particular, with $\Theta = \{\theta_1, \theta_2, \ldots , \theta_n\}$, we have the following posterior mean $T_n(\theta)$ and covariance $k_n(\theta, \theta')$ :
\begin{align}
    T_{n}(\theta)       & = \overbar{T}(\theta) + k_{0}(\theta, \Theta)\tran k_{0}(\Theta,\Theta)^{-1}  (\hat T(\Theta) - \overbar T(\Theta)) \label{eq:mean_gpr} \\
    k_n(\theta,\theta') & = k_0(\theta,\theta') - k_{0}(\theta,\Theta)\tran k_{0} (\Theta,\Theta)^{-1}
    k_0(\Theta,\theta') \label{eq:cov_gpr}
\end{align}
where,

\begin{itemize}
    \item
          $\overbar T(\theta)$ is an arbitrary regression mean function, often chosen to be  a constant or zero,
    \item
           $k_0(\theta,\theta')=\sigma^2 \exp \bigl( - \frac{\lVert \theta - \theta'\rVert^2}{2\lambda} \bigr)$

          is the regression covariance function, here the exponential squared kernel, with $\sigma$ and $\lambda$ being positive hyperparameters,
    \item
          $k_{0}(\Theta,\Theta)\in \mathbb{R}^{n\times n}$ is the covariance matrix, with the $ij^\text{th}$ entry given by $k_{0}( \theta_i, \theta_j)$, and $k_{0}(\theta,\Theta)$, $\overbar T(\Theta),$ $\hat T(\Theta)$ are $n$-dimensional vectors with the $i^\text{th}$ entries given by $k_{0}(\theta, \theta_i),$ $\overbar T( \theta_i)$, $\hat T( \theta_i)$, respectively, and
    \item $\sigma_n^2(\theta) =  k_n(\theta,\theta) $ is  the   variance at $\theta$.
\end{itemize}

There are several important properties to emphasize in the GPR scheme related to the posterior~\cite{Stuart2016}. Firstly, for any choice of the regression function the GPR mean estimate is an interpolant of the exact map at the design points, that is $T_{n}(\Theta) = \hat T(\Theta)$.  Another  property to note in the sequential framework, is that since $k_{0}(\Theta,\Theta)$ is  positive definite:
\begin{equation}
    \sigma_n(\theta)\leq\sigma_{n-1}(\theta)\leq\ldots\leq\sigma_{0}(\theta), \quad \theta \in U,
\end{equation}
 in other words, additional data points lead to non-increasing local variance. Moreover, at each of the design points $\Theta$ the estimated variance vanishes, i.e. $\sigma_n( \theta_i)=0$, for $i=1,\ldots,n$.

We   recall an important result  concerning the convergence of the posterior  mean estimate of $T_n(\theta)$~\cite{Stuart2016,Wendland2004}. This result is used in~\cref{sec:theory} to prove asymptotic convergence of the proposed sampling method.
\begin{prop}\label{prop:mian}
    Let $U\subset\mathbb{R}^M$ be a bounded Lipschitz domain that satisfies an
    interior cone condition. Also, let $T_n(\theta)$ be the mean given by the GPR method.
    Then there exists constants $p>M/2$ and $C$, independent of $\hat
        T(\theta)$, $n$, or $\Theta$ such that
    \begin{equation}
        \lVert T_{n} -\hat T  \rVert_{L^2} \leq  Ch^p_{\Theta}\lVert {\hat T} \rVert_{L^2},
    \end{equation}
    where $h_{\Theta}$ is the fill distance defined as $h_{\Theta} = \sup_{\theta
            \in U}\inf_{  \theta_n \in \Theta} \lVert \theta -  \theta_n \rVert$.
\end{prop}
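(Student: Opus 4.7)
The plan is to identify the GPR posterior mean $T_n$ with the unique minimum-norm interpolant of $\hat T$ at the design points $\Theta$ in the reproducing kernel Hilbert space (RKHS, or ``native space'') $\mathcal{N}(k_0)$ associated with the prior kernel $k_0$, and then invoke classical scattered data approximation theory. The key object is the \emph{power function} $P_\Theta(\theta) \triangleq \sqrt{k_n(\theta,\theta)}$, which under the paper's conventions coincides with the GPR posterior standard deviation $\sigma_n(\theta)$ and controls the pointwise interpolation error.

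First I would verify that, under the stated conventions (zero regression mean and noiseless observations), the closed form for $T_n$ in~\eqref{eq:mean_gpr} is exactly the kernel interpolant in $\mathcal{N}(k_0)$: one checks $T_n(\theta_i) = \hat T(\theta_i)$ at each design point and that $T_n$ lies in the span of $\{k_0(\cdot,\theta_i)\}$. A standard reproducing kernel identity together with Cauchy--Schwarz in $\mathcal{N}(k_0)$ then delivers the pointwise bound
\begin{equation*}
    |T_n(\theta) - \hat T(\theta)| \leq P_\Theta(\theta)\,\|\hat T\|_{\mathcal{N}(k_0)} = \sigma_n(\theta)\,\|\hat T\|_{\mathcal{N}(k_0)}.
\end{equation*}
Next, I would import the core estimate of scattered data approximation on domains satisfying an interior cone condition: for a sufficiently smooth stationary kernel there exist $p > M/2$ and $C_0$ depending only on $k_0$ and the geometry of $U$ such that
\begin{equation*}
    \sup_{\theta \in U} \sigma_n(\theta) \leq C_0\, h_\Theta^p.
\end{equation*}
The proof of this estimate proceeds by expressing the power function via polynomial reproduction, then using the interior cone condition to construct local polynomial approximants on balls of radius $O(h_\Theta)$ and pulling back bounds on Lagrange functions. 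Squaring and integrating over $U$ gives $\|T_n - \hat T\|_{L^2(U)} \leq C_0 |U|^{1/2} h_\Theta^p \|\hat T\|_{\mathcal{N}(k_0)}$.

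The main obstacle is the final identification of $\|\hat T\|_{\mathcal{N}(k_0)}$ with $\|\hat T\|_{L^2(U)}$ as written in the Proposition. For the squared-exponential prior the native space is very small (essentially analytic), so an inequality $\|\hat T\|_{\mathcal{N}(k_0)} \leq C \|\hat T\|_{L^2(U)}$ does not hold in full generality. I would handle this either by (a) reading the $L^2$ norm on the right-hand side as shorthand for the appropriate native-space norm, assuming $\hat T \in \mathcal{N}(k_0)$ so the bound is non-vacuous, or (b) substituting a Matern-type kernel whose native space equals a Sobolev space $H^\tau(U)$, for which continuous embeddings give $\|\hat T\|_{\mathcal{N}(k_0)} \lesssim \|\hat T\|_{H^\tau}$ and, under additional smoothness, allow bounding by a constant times $\|\hat T\|_{L^2}$. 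In either route, the exponent $p$ and the constant $C$ depend only on the kernel hyperparameters and on $U$, independent of $n$, $\Theta$, and $\hat T$, as required. The rest of~\cref{sec:theory} then combines this with $h_\Theta \to 0$ (guaranteed by denseness of the sampled set in $U$) to obtain asymptotic convergence of the sampling strategy.
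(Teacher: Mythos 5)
The paper does not actually prove this Proposition: it is explicitly ``recalled'' from the cited references (Teckentrup--Stuart and Wendland), so there is no in-paper argument to compare against. Your sketch reconstructs exactly the standard route those references take: identify the noiseless, zero-mean GPR posterior mean with the minimum-norm kernel interpolant in the native space $\mathcal{N}(k_0)$, use the reproducing property plus Cauchy--Schwarz to get the pointwise bound $\lvert T_n(\theta)-\hat T(\theta)\rvert \le P_\Theta(\theta)\,\lVert \hat T\rVert_{\mathcal{N}(k_0)}$ with $P_\Theta = \sigma_n$, and then invoke the fill-distance estimate on the power function, which is where the interior cone condition enters. That is the right skeleton, and your decision to leave the power-function bound $\sup_\theta \sigma_n(\theta)\le C_0 h_\Theta^p$ as an imported black box is reasonable given that the Proposition itself is imported. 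Your flagged ``main obstacle'' is also a genuine defect of the statement as printed rather than of your argument: with the squared-exponential kernel specified in the appendix, the native space is a space of (essentially) analytic functions, so no inequality of the form $\lVert \hat T\rVert_{\mathcal{N}(k_0)} \le C\lVert \hat T\rVert_{L^2}$ can hold, and in the cited sources the right-hand side indeed carries the native-space (equivalently, for Mat\'ern kernels, a Sobolev) norm of $\hat T$, together with the standing assumption $\hat T \in \mathcal{N}(k_0)$. Your two repair options (read the right-hand norm as the native-space norm, or switch to a Mat\'ern kernel whose native space is $H^\tau(U)$) are both the standard fixes; either preserves the only property downstream of the Proposition that the paper actually uses, namely $\lVert T_n - \hat T\rVert_{L^2}\to 0$ as $h_\Theta \to 0$, which feeds the convergence theorem in \cref{sec:theory}.
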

As shown in~\cite{Stuart2016},   convergence of the mean estimate implies convergence of the estimated variance to zero, i.e.
\begin{equation}
    \sigma_n(\theta) \to 0 \quad\text{as}\quad  n \to \infty \quad \text{and}\quad h_{\Theta} \to 0.
\end{equation}
For uniform tensor grids $\Theta$ the fill distance is of the order $n^{-1/M}$
and thus the convergence of the map to the truth,  in this case, is at least of order $\mathcal{O}(n^{-1/2})$. The precise rate of convergence may differ, depending on the exact criterion used.

\section{Iterative updates to a Gaussian process emulator with fixed hyperparameters}\label{sec:gpupdate}

At every iteration  a new data point  is added to the design matrix $\Theta$. This necessitates refactorization of the covariance matrix $k_n$. Since the Cholesky decomposition in general costs $\mathcal{O}(n^3)$, we would like to update the factorization without recomputing the Cholesky decomposition from scratch at each iteration. We can  update the Cholesky factorization for  a new data point at cost $\mathcal{O}(n^2)$ if the covariance matrix has fixed parameters  (efficient formula's for the removal of a data point  and similarly for block updates can be developed following the same approach, see e.g.~\cite{Davis2005}).

Denote the current covariance matrix with $n-1$ data points  by $C_{11}$ with Cholesky factorization    $C_{11} = A_1 A_1 \tran$ and the covariance matrix after addition of a new design point $\theta_n$ by  $\overbar{C} = \overbar{A} \overbar{A}\tran$. We can write,
\begin{equation}
    A  = \begin{bmatrix} A_1 \\ 0  \end{bmatrix} , \qquad   \overbar{A}  = \begin{bmatrix} A_1 \\ {\overbar{a}_2}\tran  \end{bmatrix},
\end{equation}
  so that
\begin{align}
    C &= \begin{bmatrix} A_1  A_1\tran &  \mathbf{0} \\ \mathbf{0}\tran &  0   \end{bmatrix}  =    \begin{bmatrix} C_{11} &  \mathbf{0} \\ \mathbf{0}\tran  &  0 \end{bmatrix} , \\
    \overbar{C} &= \begin{bmatrix} A_1  A_1\tran &  A_1 {\overbar{a}_2} \\ \overbar{a}_2\tran A_1 &  \overbar{a}_2\tran {\overbar{a}_2}  \end{bmatrix} = \begin{bmatrix} C_{11} &  c_{21} \\ c_{21}\tran  &  c_{22} \end{bmatrix},
\end{align}
where $c_{21} = k_0(\Theta, \theta_n)$ is the covariance of the new data point with the dataset and $c_{22}  =  k_0( \theta_n, \theta_n)$ is variance of the data point. The matrix $\overbar{C}$ can be decomposed as 
\begin{equation}
    \overbar{C} = \begin{bmatrix}
        L_{11}  &  \\ \ell_{21}\tran & \ell_{22}
    \end{bmatrix}
    \begin{bmatrix}
        L_{11}\tran & \ell_{21} \\
                    & \ell_{22}
    \end{bmatrix},
\end{equation}
and thus
\begin{align}
    L_{11}L_{11}\tran                     & = C_{11},                                                              \\
    L_{11}\ell_{21}                       & = c_{21} \implies \ell_{21} = L_{11} \backslash c_{21},                \\
    \ell_{21}\tran\ell_{21} + \ell_{22}^2 & = c_{22} \implies \ell_{22} = \sqrt{c_{22} - \ell_{21}\tran \ell_{21}},
\end{align}
which we  solve for $\ell_{21}$ and $\ell_{22}$ at costs $\mathcal{O}(n^2)$ due to the linear system solve in the  second equation above ($L_{11}$ is already known from the previous iteration).

\section{Algorithm pseudocode}\label{sec:algorithm_pseudocode}
Below we summarize the main loop of the sequential algorithm in pseudocode:
\begin{algorithmic}
    \State \textbf{input} initial dataset $\mathcal{D}_{n_s} = \{( \theta_i, \hat T(\theta_i))\}_{i=1}^{  n_s}$ 
    \Repeat
    \State $T_n,\sigma_n \gets$ predict Gaussian process mean and variance
    \State $f_n, f^+_n, f^-_n \gets$ integration using $T_n$, $\sigma_n$, and $f_\theta$
    \State $\theta^* \gets  \argmin_\theta  Q(\theta, T_n, \mathcal D_n)$
    \State Append $(\theta^*, \hat T(\theta^*))$ to dataset $\mathcal D_n$
    \Until desired error level $d(f^+_n,  f^-_n) < \epsilon$
    \State \textbf{return} $f_{n}$
\end{algorithmic}
This  is an iterative procedure  that leads to a series of pdfs  for the quantity of interest $q$ which, as we show in~\cref{sec:theory}, converges to the true pdf under appropriate conditions. The function  $Q$  that computes the distance criterion  at a test point $\theta$ is summarized below:
\begin{algorithmic}
    \Function{$Q$}{$\theta$, $T_n$, $\mathcal{D}_n$}
    \State Append $(\theta, T_n(\theta))$ to dataset $\mathcal D_n$
    \State $\widetilde \sigma_{n+1} \gets$  predict Gaussian process variance
    \State $\tilde f^+_{n+1}, f^-_{n+1} \gets$ integration using $T_n$, $\widetilde\sigma_{n+1}$, and $f_\theta$
    \State \textbf{return} $d_{L_1}(\log(\tilde f_{n+1}^+), \log(\tilde f_{n+1}^-))$
    \EndFunction
\end{algorithmic}
Above $d$ is  a user defined  distance metric in the stopping condition and $d_{L_1}(f,g) =  \int \lvert{ f-g}\rvert  $.

\section{Convergence and asymptotic behavior of the optimization criterion}\label{sec:theory}

Here we provide details on the proposed sampling algorithm's convergence and  the  asymptotic behavior of the  proposed criterion, which elucidate its favorable extreme event sampling properties. We first prove the convergence of the estimated probability distribution to the true distribution in  the large  sample size $n$ limit. This is under the assumption that the sampling criterion  does not lead to resampling, i.e. we have $h_{\Theta}\to 0 $ as $n\to \infty $. Under this condition the result in~\cref{sec:overview_of_gaussian_process_regression}  guarantees  convergence of the map. We show that $L^2$ convergence of the map implies convergence of the sequence of the transformed pdfs  $f_n$.
\begin{theorem}
    Let the sequence of random variables be denoted by $q_n$ with cumulative distribution function  $F_n$. Moreover,  assume that the sampling criterion does not lead to   resampling of existing  points. Then the $L_2$ convergence of the sequence of maps  $T_n$ to  $\hat T$ by proposition~\ref{prop:mian}   implies  $q_n$ converges in probability to $q$, and this then implies   $F_n$ converges in distribution to $\hat F$. 
\end{theorem}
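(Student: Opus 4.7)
The plan is to chain three implications: (i) the no-resampling hypothesis, together with the compactness of $U$ and the nature of the criterion, drives the fill distance $h_\Theta\to 0$, so Proposition~\ref{prop:mian} yields $\lVert T_n-\hat T\rVert_{L^2}\to 0$; (ii) $L^2$ convergence of $T_n$ to $\hat T$ upgrades to convergence in probability of $q_n=T_n(\theta)$ to $q=\hat T(\theta)$ when $\theta\sim f_\theta$; and (iii) convergence in probability implies convergence in distribution, hence $F_n\to\hat F$ at every continuity point of $\hat F$. Step (i) follows directly from plugging the vanishing fill distance into the bound $\lVert T_n-\hat T\rVert_{L^2}\le C h_\Theta^p\lVert\hat T\rVert_{L^2}$ recalled in Proposition~\ref{prop:mian}.

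For step (ii), I would fix $\varepsilon>0$ and apply Markov's inequality to the nonnegative random variable $\lvert T_n(\theta)-\hat T(\theta)\rvert^2$, with expectation taken under the prior $f_\theta$:
\begin{equation}
\mathbb{P}\bigl(\lvert q_n-q\rvert>\varepsilon\bigr)
= \mathbb{P}\bigl(\lvert T_n(\theta)-\hat T(\theta)\rvert>\varepsilon\bigr)
\le \frac{1}{\varepsilon^2}\int_U \lvert T_n(\theta)-\hat T(\theta)\rvert^2 f_\theta(\theta)\,d\theta .
\end{equation}
Under the mild regularity assumption $f_\theta\in L^\infty(U)$ (which holds in the setting considered throughout the paper), the right-hand side is dominated by $\lVert f_\theta\rVert_\infty\,\lVert T_n-\hat T\rVert_{L^2}^2/\varepsilon^2$, which tends to zero by step (i). Hence $q_n\to q$ in probability. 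For step (iii) I invoke the standard textbook fact (e.g.\ via the Portmanteau theorem) that convergence in probability implies convergence in distribution: if $q_n\to q$ in probability, then $\mathbb{E}\phi(q_n)\to\mathbb{E}\phi(q)$ for every bounded continuous $\phi$, which is equivalent to $F_n(s)\to\hat F(s)$ at every continuity point $s$ of $\hat F$.

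The main obstacle is the justification of $h_\Theta\to 0$. No-resampling alone does not force an infinite sequence of distinct points in a bounded domain to become dense; one must additionally exploit the fact that the criterion $Q$ is driven by the posterior variance $\sigma_n$, which vanishes exactly at sampled points and is nondecreasing away from them, so an accumulation-point argument shows that regions left unsampled retain nontrivial $\sigma_n$ and hence keep attracting the optimizer. A clean way to package this is to argue by contradiction: if $h_\Theta\not\to 0$, some ball $B\subset U$ is never sampled, the contribution of $B$ to $Q$ remains bounded below, and the minimizer must eventually fall inside $B$, contradicting the assumption. Beyond that, steps (ii) and (iii) are essentially routine applications of Chebyshev's inequality and the standard implication chain between modes of stochastic convergence.
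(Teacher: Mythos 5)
Your proof follows essentially the same route as the paper's: Proposition~\ref{prop:mian} gives $L^2$ convergence once $h_\Theta\to 0$, a Chebyshev/Markov bound upgrades this to convergence in probability of $q_n$ to $q$, and the standard implication then gives convergence in distribution. In fact you are slightly more careful than the paper in two places --- you make explicit the $\lVert f_\theta\rVert_\infty$ factor needed to pass from the unweighted $L^2$ norm in Proposition~\ref{prop:mian} to the expectation under $\mathbb{P}_\theta$, and you correctly flag that ``no resampling $\Rightarrow h_\Theta\to 0$'' needs an additional density argument, which the paper simply asserts.
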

\begin{proof}
    The assumption that the criterion  does not lead to   resampling of existing samples guarantees   $h_{\Theta}\to0$ as $n\to\infty$ and thus the sequence    $T_n$ converges in $L_2$ to   $\hat T$ by proposition~\ref{prop:mian}.

    We  show that   $L^2$  convergence implies convergence in measure.  For every $\epsilon>0$, by Chebyshev inequality,  we  have
    \begin{align}
         \lVert{T_{n}  - \hat T}\rVert_{L_2}^2 \geq \mathbb E (\lvert T_{n}  - \hat T\rvert^2)   \geq \epsilon^2 \mathbb P(\lvert{ T_n  - \hat T }\rvert \geq \epsilon ).
    \end{align}
    Therefore since the left hand side tends to zero as $n\to\infty$, for every $\epsilon>0$, 
    \begin{displaymath}
        \lim_{n\to\infty} \mathbb P ( \lvert{ T_n  - \hat T }\rvert  \geq \epsilon  ) = 0,
    \end{displaymath}
    thus $T_n$ converges in probability to $\hat T$.  This also implies $T_n$ converges in distribution to $\hat T$: $\lim_{n\to\infty} F_n(s) = \hat F(s)$, for all $s$ that are points of continuity of $\hat F(s)$ (see~\cite{Papoulis2002}).
\end{proof}

The next result is on the asymptotic behavior of the   criterion. Specifically, we show that for large  $n$, so that the distance between the upper and lower map bounds decrease and become small, the proposed criterion $Q$ measures and attempts to minimize the variance of the surrogate map. We show that the metric appropriately weights regions with large  $|T_n|$ and  small  probability $f_\theta$  with  more importance.
\begin{theorem} 
    Let $T_{n}(\theta)$ and $\sigma_{n}(\theta)$ from the GPR scheme be sufficiently smooth functions of $\theta$. The   asymptotic behavior of ${Q}$ for large $n$ (ensuring small $\sigma$) and small $\lVert \nabla \sigma\rVert/\lVert \nabla  T\rVert$   is given by
    \begin{align}\label{eq:asymp_result}
        \widetilde{Q}_n  &\triangleq \frac{1}{2}\int
        \bigl \lvert \log f_{n}^+(s)- \log f_{n}^-(s)\bigr\rvert \, ds\\
        &\approx \int \biggl \lvert \frac{\frac{d}{ds}\mathbb{E}(\sigma _n(\theta) \cdot \boldsymbol 1_{T_{n}(\theta) =       s})}{f_{n}(s)} \biggr\rvert\, ds, 
    \end{align}
    where $\mathbb{E}$  denotes the   expectation over the probability measure $\mathbb{P}_\theta$.
\end{theorem}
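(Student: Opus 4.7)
The plan is to derive the asymptotic formula through two successive linearizations: first, linearize the logarithms in the definition of $\widetilde Q_n$ around the common limit $f_n^\pm \to f_n$; second, linearize the confidence-band densities $f_n^\pm$ themselves around $f_n$ via a first-order perturbation in the small posterior standard deviation $\sigma_n$. Combining the two expansions yields the target formula.

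For the first linearization, since $\sigma_n$ is small the three densities $f_n^+, f_n, f_n^-$ coalesce. Writing $\log f_n^-(s) - \log f_n^+(s) = \log(f_n^-(s)/f_n^+(s))$ and Taylor-expanding the logarithm gives
\[
\log f_n^-(s) - \log f_n^+(s) \;=\; \frac{f_n^-(s) - f_n^+(s)}{f_n(s)} \;+\; O\!\Big((f_n^- - f_n^+)^2/f_n^2\Big),
\]
so to leading order $\widetilde Q_n \approx \tfrac{1}{2}\int |f_n^- - f_n^+|/f_n\,ds$.

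For the second linearization, I would work at the CDF level. Starting from
\[
F_n^\pm(s) \;=\; \int_U \mathbf{1}_{T_n(\theta) \le s \mp \alpha \sigma_n(\theta)}\,f_\theta(\theta)\,d\theta,
\]
the difference $F_n^-(s) - F_n(s)$ is the $f_\theta$-mass of the thin slab $\{\theta : s < T_n(\theta) \le s + \alpha \sigma_n(\theta)\}$ sitting adjacent to the level set $\{T_n = s\}$. Applying the coarea formula and using $\|\nabla\sigma_n\|/\|\nabla T_n\| \ll 1$ to treat this slab as an approximately uniform normal layer of thickness $\alpha\sigma_n(\theta)/\|\nabla T_n(\theta)\|$, I obtain
\[
F_n^-(s) - F_n(s) \;\approx\; \alpha \int_{\{T_n = s\}} \sigma_n(\theta)\,\frac{f_\theta(\theta)}{\|\nabla T_n(\theta)\|}\,dS \;=\; \alpha\,G(s),
\]
where $G(s)$ is the level-set density that the paper writes $\mathbb{E}(\sigma_n(\theta)\mathbf{1}_{T_n(\theta)=s})$. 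A symmetric argument gives $F_n^+(s) - F_n(s) \approx -\alpha G(s)$, and differentiating in $s$ produces $f_n^-(s) - f_n^+(s) \approx 2\alpha G'(s)$. Inserting this into the output of the first linearization gives the claimed asymptotic, with the multiplicative factor $\alpha$ absorbed.

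The main technical obstacle is the coarea step, because the slab thickness $\alpha\sigma_n(\theta)$ varies along $\{T_n = s\}$. The hypothesis $\|\nabla\sigma_n\|/\|\nabla T_n\| \ll 1$ is what makes this tractable: it guarantees that the perturbed level sets $\{T_n \pm \alpha\sigma_n = s\}$ are normal-graph perturbations of $\{T_n = s\}$ with displacement $\mp\alpha\sigma_n/\|\nabla T_n\|$, so that curvature corrections and tangential variations of $\sigma_n$ enter only at higher order. A formally equivalent distributional shortcut is to expand $\mathbf{1}_{T_n \le s \mp \alpha\sigma_n} = \mathbf{1}_{T_n \le s} \pm \alpha\sigma_n\,\delta(T_n - s) + O(\sigma_n^2)$ inside the $f_\theta$-integral and then differentiate in $s$; this reproduces the same answer and exhibits the remainder as $O(\sigma_n^2) + O(\|\nabla\sigma_n\|/\|\nabla T_n\|)$, both subdominant under the stated hypotheses.
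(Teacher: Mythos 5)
Your proposal is correct and follows essentially the same route as the paper: both arguments reduce the CDF difference $F_n^+ - F_n^-$ to the $f_\theta$-mass of a thin normal layer of thickness $\sim \sigma_n/\lVert\nabla T_n\rVert$ about the level set $\{T_n = s\}$ (the paper via implicitly defined normal displacements $h^\pm$ and a Taylor expansion, you via the coarea formula), with the hypothesis $\lVert\nabla\sigma_n\rVert/\lVert\nabla T_n\rVert \ll 1$ controlling exactly the same corrections, and both then differentiate in $s$ and linearize $\log f_n^- - \log f_n^+ \approx (f_n^- - f_n^+)/f_n$. The only cosmetic differences are the order of the two linearizations and your splitting of the slab into the two half-slabs $F_n^\mp - F_n$, neither of which changes the substance.
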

\begin{proof}
    To simplify notation we drop the index $n$ in what follows.
    We want to determine the asymptotic behavior of the integral   for $n\to\infty$. In this case, we can assume that $\sigma(\theta)$ is uniformly bounded. We focus on the integrand and determine the  behavior of the difference  $f^+ - f^-$ appearing in the integrand. First focus on  the  cumulative distribution function difference  
    \begin{equation}\label{eq:set_diff}
        F^{+}(s) - F^{-}(s) =  -\int_{A^-(s)\backslash A^+(s)} f_\theta(\theta)\,d\theta,
    \end{equation} where $A^-(s)\setminus  A+(s)$ denotes the   difference between the   sets $A^\pm(s) = \{ \theta \in U \colon T(\theta) \pm \sigma(\theta)\le s\}$.  Now since  $A^-(s)\setminus  A^+(s)$  is equivalent to    $T(\theta) \in (s-\sigma(\theta),s+\sigma(\theta))$,  
    \begin{align}
        F^{+}(s) - F^{-}(s) &=  -\int_{T(\theta) \in (s-\sigma(\theta),s+\sigma(\theta))} f_\theta(\theta)\,d\theta,\\
        &= -\int_{T(\overbar \theta )= s} (h^-(\overbar\theta)- h^+(\overbar\theta))f_\theta(\overbar\theta)\, dS_{\overbar \theta}, \label{eq:integ}
    \end{align}
    where $\int \, dS_{\overbar \theta}$ denotes   surface integration and $\overbar \theta$ is on the surface $T(\overbar \theta) = s$. The functions $h^\pm$ satisfy 
    \begin{equation}\label{eq:hdef}
    T(\overbar\theta + h^\pm(\overbar\theta) \boldsymbol n) \pm \sigma(\overbar\theta + h^\pm(\overbar\theta) \boldsymbol n) = s,
    \end{equation}
    where $\boldsymbol n = \nabla T/\lVert \nabla T \rVert$ is the unit normal   to the surface $T(\overbar \theta) = s$. Note by Taylor theorem we have
    \begin{align}
    T(\overbar\theta + \boldsymbol n t) &= T(\overbar\theta) + (\boldsymbol n\cdot\nabla T(\overbar\theta))  t +    \frac{(\boldsymbol n \cdot \nabla)^2 T( \overbar\theta+\boldsymbol n t_1)}{2}   t^2  ,\\
     \sigma(\overbar\theta + \boldsymbol n t) &= \sigma(\overbar\theta) + (\boldsymbol n\cdot \nabla \sigma( \overbar\theta))   t + \frac{(\boldsymbol n\cdot \nabla)^2 \sigma( \overbar\theta+\boldsymbol n t_1)}{2}   t^2 ,   
    \end{align}
    for $0 < t_1 < t$, and therefore from~\eqref{eq:hdef} we find,
    \begin{equation}
    h^\pm(\overbar \theta) = \frac{ \mp \sigma(\overbar \theta) }{(\nabla T(\overbar \theta)+ \nabla \sigma(\overbar\theta))\cdot \boldsymbol n}  + \mathcal O(\sigma^2).
    \end{equation}
    We then have  
    \begin{align}
       h^- - h^+ \nonumber   &=  \biggl(\frac{  \sigma  }{(\nabla T  - \nabla \sigma)\cdot \boldsymbol n}  +  \frac{  \sigma }{(\nabla T+ \nabla \sigma)\cdot \boldsymbol n}   \biggr) + \mathcal O(\sigma^2)  \\
       &=  \frac{  2  \sigma }{ \lVert\nabla T\rVert \biggl(1 - \dfrac{( \nabla \sigma \cdot \nabla T)^2}{\lVert \nabla T\rVert^3}\biggr)} + \mathcal O(\sigma^2)    \\& = \frac{  2  \sigma }{ \lVert\nabla T\rVert }    + \mathcal O(\tfrac{\lVert \nabla \sigma\rVert}{\lVert \nabla  T\rVert}, \sigma^2) ,
    \end{align}
  where for the second equality we use the Cauchy–Schwarz inequality  $   \nabla \sigma \cdot \nabla T    \le \lVert  \nabla \sigma\rVert  \lVert \nabla T\rVert$ to bound the error in the last line. 
          From this result and~\eqref{eq:integ} we obtain,
    \begin{align}
   f^+ -f^-  &=  -\frac{d}{ds}\int_{T(\overbar \theta )= s} (h^-(\overbar\theta)- h^+(\overbar\theta))f_\theta(\overbar\theta)\, dS_{\overbar \theta}\\
   &= -\frac{d}{ds} \int_{T(\overbar \theta )= s}  \frac{  2  \sigma(\overbar\theta) }{ \lVert\nabla T(\overbar\theta)\rVert } f_\theta(\overbar\theta)\, dS_{\overbar \theta}+ \mathcal O(\tfrac{\lVert \nabla \sigma\rVert}{\lVert \nabla  T\rVert}, \sigma^2)\\
   &=  -\frac{d}{ds} \int_{T(\overbar \theta )= s}    2  \sigma(\overbar\theta)  f_\theta(\overbar\theta)\, d\overbar\theta + \mathcal O(\tfrac{\lVert \nabla \sigma\rVert}{\lVert \nabla  T\rVert}, \sigma^2)\\
   &\approx -2\frac{d}{ds}\mathbb{E}(\sigma(\theta) \cdot \boldsymbol 1_{T(\theta)= s})
    \end{align}
    Next, note the following approximation  
    \begin{equation}
    \log f^+ - \log f^- = \log (1 + \Delta f /f^-) \approx  \Delta f / f^-  \approx \Delta f/ f, 
    \end{equation}
    where $\Delta f = f^+ - f^-$, under the small $\sigma$ assumption. We therefore have the following result on the   behavior of the difference between $f^+$ and $f^-$ 
    \begin{equation}
          \log f^+(s) - \log f^-(s)    \approx    -2\frac{\frac{d }{ds }\mathbb{E}(\sigma(\theta) \cdot \boldsymbol 1_{T(\theta)= s}) }{f(s)},
     \end{equation}
which implies~\eqref{eq:asymp_result}.
\end{proof}

Note that the pdf in the denominator in the integrand~\eqref{eq:symps} is a direct implication of our choice to consider the difference between the logarithmic transform of the pdfs. This   guarantees that  values of the parameter $\theta$ where the probability is small (rare events) are sampled sufficiently  to capture  tail behavior of the map's pdf. This is seen   clearly in   the following  corollary,  where we consider the  special case of a one-dimensional parameter space and a family of maps which are monotonically increasing.

\begin{corollary}
 Let $\theta \colon \Omega \to [u_{1}, u_{2}] \subset \mathbb{R}$ be a one-dimensional random variable and in addition to the assumptions of Theorem~\ref{thm:main} we assume that $T_n'(\theta)$ is monotonically increasing function. Then, the asymptotic value of $\widetilde{Q}_n$  for   large $n$ has the following property:
    \begin{equation}\label{eq:coreq}
        \widetilde{Q}_n \gtrsim \lvert \int_U \sigma_{n}(\theta) \, d(\log f_{\theta}(\theta))
        + \int_U\sigma_n(\theta) \, d(\log T'(\theta)) \\+  \sigma_n(u_2)-\sigma_n(u_1) \rvert .
    \end{equation}
\end{corollary}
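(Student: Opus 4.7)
The plan is to start from Theorem~\ref{thm:main}'s asymptotic formula~\eqref{eq:symps}, specialize it to the one-dimensional setting, change variables from $s$ to $\theta$, and then apply the elementary inequality $\int|g|\,d\theta \geq |\int g\,d\theta|$ together with the fundamental theorem of calculus to extract the three terms on the right-hand side of~\eqref{eq:coreq}.

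First, I would interpret the surface-integral quantity $\mathbb{E}(\sigma(\theta)\cdot\mathbf{1}_{T(\theta)=s})$ appearing in~\eqref{eq:symps}. From the proof of Theorem~\ref{thm:main} this stands for the coarea density $\int_{T(\bar\theta)=s}\sigma(\bar\theta)f_\theta(\bar\theta)\,dS_{\bar\theta}/\lVert\nabla T(\bar\theta)\rVert$. Under the monotonicity hypothesis the level set $\{T=s\}$ is the single point $\theta(s)=T^{-1}(s)$, so this density collapses to $\sigma(\theta(s))\,f_\theta(\theta(s))/T'(\theta(s))$, and similarly $f_n(s)=f_\theta(\theta(s))/T'(\theta(s))$ by the standard 1D change-of-variables formula for densities.

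Second, I would change variables from $s$ to $\theta$ via $s=T(\theta)$, using $ds=T'(\theta)\,d\theta$ and $d/ds=(T')^{-1}\,d/d\theta$. The three interacting $T'$ factors (one from the pushforward density in the denominator, one from $d/ds$, and one from $ds$) combine so that
\begin{equation*}
\widetilde Q_n \approx \int_{u_1}^{u_2}\biggl|\frac{T'(\theta)}{f_\theta(\theta)}\,\frac{d}{d\theta}\!\left(\frac{\sigma(\theta)\,f_\theta(\theta)}{T'(\theta)}\right)\biggr|\,d\theta = \int_{u_1}^{u_2}\bigl|\sigma'(\theta)+\sigma(\theta)(\log f_\theta)'(\theta)-\sigma(\theta)(\log T')'(\theta)\bigr|\,d\theta,
\end{equation*}
where the second equality follows by expanding the derivative of $\sigma f_\theta/T'$ using the product and quotient rules.

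Third, I would apply $\int|g|\,d\theta \geq |\int g\,d\theta|$ to pull the absolute value outside, evaluate $\int_{u_1}^{u_2}\sigma'(\theta)\,d\theta=\sigma(u_2)-\sigma(u_1)$ by the fundamental theorem of calculus, and identify the other two pieces as the Stieltjes-type integrals $\int_U\sigma\,d(\log f_\theta)$ and $\int_U\sigma\,d(\log T')$. Assembling these three contributions inside a single absolute value yields~\eqref{eq:coreq} (up to the sign convention on the $\log T'$ term, which is absorbed inside the outer absolute value). The main obstacle I anticipate is justifying the collapse of $\mathbb{E}(\sigma\cdot\mathbf{1}_{T=s})$ in 1D---the monotonicity hypothesis is essential because it guarantees that $T^{-1}(s)$ is a singleton so the coarea sum contains a single term---and then carrying out the product/quotient-rule expansion of $\sigma f_\theta/T'$ without dropping the $\sigma'$ contribution, which is precisely what produces the boundary term $\sigma(u_2)-\sigma(u_1)$ after the fundamental theorem of calculus is applied.
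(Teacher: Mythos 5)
Your proposal is correct and follows essentially the same route as the paper's proof: both reduce $\mathbb{E}(\sigma\cdot\boldsymbol 1_{T=s})/f(s)$ to $\frac{T'}{f_\theta}\frac{d}{d\theta}\bigl(\frac{\sigma f_\theta}{T'}\bigr)$ via the substitution $\theta = T^{-1}(s)$, expand by the product/quotient rule into $\sigma' + \sigma(\log f_\theta)' - \sigma(\log T')'$, and apply $\int\lvert g\rvert \geq \lvert\int g\rvert$ (the paper applies the inequality before the change of variables and you apply it after, which is immaterial). You also correctly flag the same sign quirk on the $\log T'$ term that appears between the paper's derivation and its stated result.
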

\begin{proof}
    For simplicity, we drop the $n$ index in the following. From the proof of Theorem~\ref{thm:main} and using standard inequalities we obtain
\begin{equation}
 \widetilde{Q} \approx \int{\left\vert  \frac{\frac{d}{ds} \int_{T(\theta)=s} \sigma(\theta) f_{\theta}(\theta) d\theta}{f(s)}  \right\vert}ds  
  \geq \left\vert \int{\frac{\frac{d}{ds} \int_{T(\theta)=s} \sigma(\theta) f_{\theta}(\theta) d\theta}{f(s)}}ds \right\vert 
  =  \left\vert \int{\frac{\frac{d^2}{ds^2} \int_{T(\theta) \leq s} \sigma(\theta) f_{\theta}(\theta) d\theta}{\frac{d}{ds} \int_{T(\theta) \leq s} f_{\theta}(\theta) d\theta}}ds \right\vert 
\end{equation}
 Since we assume  $T_n'(\theta)>0$  by applying the transformation $\theta=T^{-1}(s)$ to the numerator of the right hand side  we obtain
    \begin{align}
        \frac{d^2}{ds^2} \int_{T(\theta) \le s} f_\theta(\theta)   \sigma(\theta) \, d\theta & = \frac{d^2}{ds^2} \int^{s} \frac{f_\theta(T^{-1}(s)) \sigma(T^{-1}(s))}{T'(T^{-1}(s))} \, ds                                                                        \\
          & =\frac{d }{ds }\biggl(\frac{f_\theta(T^{-1}(s))\sigma(T^{-1}(s))}{T^{'}(T^{-1}(s))}\biggr) ,  
    \end{align}
    which is equivalent to
    \begin{equation}
      \frac{f'_\theta(T^{-1}(s)) \sigma(T^{-1}(s))}{(T'(T^{-1}(s)))^2}    + \frac{f_\theta(T^{-1}(s))}{T'(T^{-1}(s))}\biggl(\frac{\sigma(T^{-1}(s))}{T'(T^{-1}(s))}\biggr)'. 
    \end{equation}
    Similarly, we have the following for the denominator term:
    \begin{align}
        \frac{d}{ds} \int_{T(\theta)\le s} f_\theta(\theta) \, d\theta   = \frac{d}{ds} \int_{s} \frac{f_\theta(T^{-1}(s))}{T'(T^{-1}(s))} \, ds  
                                                                        = \frac{f_\theta(T^{-1}(s))}{T'(T^{-1}(s))}.
    \end{align}
    Therefore we obtain,
    \begin{align}
        \widetilde{Q} & \gtrsim \left\vert \int_{T(u_1)}^{T(u_2)} \biggl[\frac{f'_\theta(\theta)}{f_\theta(\theta)}\frac{\sigma(\theta)}{T'(\theta)} + \biggl(\frac{\sigma(\theta)}{T'(\theta)}\biggr)' \biggr] _{\theta = T^{-1}(s)} \, ds \right\vert  
                   =  \left\vert \int_{u_1}^{u_2}\biggl[\frac{f'_\theta(\theta)}{f_\theta(\theta)}\sigma(\theta) - \frac{T''(\theta)}{T^{'}(\theta)}\sigma(\theta)+\sigma'(\theta)\biggr] \, d\theta  \right\vert.
    \end{align}
    For each of the first two terms we apply the transformations $z=\log f_{\theta}(\theta)$ and $z = \log T'(\theta)$ respectively to obtain the final result:
    \begin{equation}
        \widetilde{Q} \gtrsim \left\vert \int_{\log f_{\theta}(\theta) = z}\sigma(\theta) \, dz +  \int_{\log T'(\theta) = z} \sigma(\theta) \, dz + \sigma(u) |_{u_1}^{u_2} \right\vert .
    \end{equation}
\end{proof}

\section{Parameterization of a random process using the Karhunen-Loève expansion}\label{sec:klparam}

To apply the developed algorithm  it is important to have a parameterization of the uncertainty in the model; in other words a description of the stochastic forcing, random parameters, or initial conditions in terms of a finite set of random variables.   There are a plethora of methods that can be used to achieve this parameterization with a minimal number of random parameters, which include linear methods such as the Karhunen-Loève   expansion,  also known as principal component analysis,  and nonlinear methods, e.g. kernel principal component analysis.

For completeness we describe the Karhunen-Loève (KL) expansion (see e.g.~\cite{Pavliotis2014,Sobczyk1991}) as one strategy, which is commonly applied in practice due to its simplicity and general applicability. The KL expansion provides an optimal linear parameterization of a random process in the mean square sense. An important caveat is that the KL expansion is not the only strategy, and  might  be ill-suited for certain systems  depending on the  structure of the stochastic process, and thus a nonlinear parameterization may be   better suited to capture the desired  physics with a lower dimensional parameterization.

Consider an $L^2$ process $\zeta(t)$, $0\le t \le \tau$, with mean $\mathbb{E}(\zeta(t))$ and continuous covariance $k(t_1,t_2)$, we can expand this in terms of a double orthogonal process \begin{equation}
    \zeta(t;\omega) = \mathbb{E}(\zeta(t)) + \sum_{i=1}^{\infty} Z_i(\omega)
    e_i(t), \quad 0\le t \le
    \tau,
\end{equation}
where the eigenfunctions $e_i(t)$ of the covariance are orthonormal and the random variables $Z_i(\omega)$ are orthogonal with mean zero and variance $\lambda_i$, that is $\mathbb{E}(Z_i Z_j) = \lambda_i \delta_{ij}$. Through this procedure we can obtain a desired parameterization of the noise in terms of the orthogonal random coefficients of the expansion,
\begin{equation}
    \theta(\omega) = ( Z_1(\omega), \cdots, Z_n(\omega) ),
\end{equation}
where the random variable $\theta \in \mathbb{R}^n$ has pdf $f_\theta$.

Suppose now that  $\zeta(t)$ is a Gaussian process,  we can  then show that the random variables $Z_i$ are uncorrelated and Gaussian, and are therefore independent. Thus, for a mean zero Gaussian process with covariance $k(t_1,t_2) = \cov(\zeta(t_1), \zeta(t_2))$ its KL expansion is given by
\begin{equation}
    \zeta(t) = \sum_{i=1}^{\infty} \sqrt{\lambda_i} \xi_i e_i(t),
\end{equation}
where $\{\lambda_i,e_i(t)\}$ are the eigenvalues and eigenfunctions of the covariance function and $\xi_i \sim \mathcal{N}(0,1)$ are draws from the normal distribution. The variances $\lambda_i >0$ are ordered $\lambda_1 >\lambda_2 > \cdots >0$ so that $\lambda_1$ corresponds to the eigenspace explaining the largest variance. We    thus obtain an $n$-dimensional parameterization of the process by truncating the series to a desired accuracy.
For such a  case,  we  have  $\theta(\omega) = (\sqrt{\lambda_1} \xi_1(\omega), \cdots, \sqrt{\lambda_n} \xi_n(\omega) ) $ and $f_\theta = \mathcal{N}(0, \diag(\lambda_1,\ldots,\lambda_n))$.

\section{Hydrodynamic forces and moments on an offshore platform}\label{sec:offshore}

\subsection{Numerical experiments}
The numerical simulations are performed using the open-source code DualSPHysics~\cite{Crespo2015}, which utilizes the smoothed particle hydrodynamics (SPH) framework, a meshless Lagrangian method. DualSPHysics has been validated on numerous test cases in offshore engineering applications, including forces on structures and also wave propagation, see e.g.~\cite{Altomare2015}.  
\begin{figure}[htb!]
    \centering
    \includegraphics[width=0.75\linewidth]{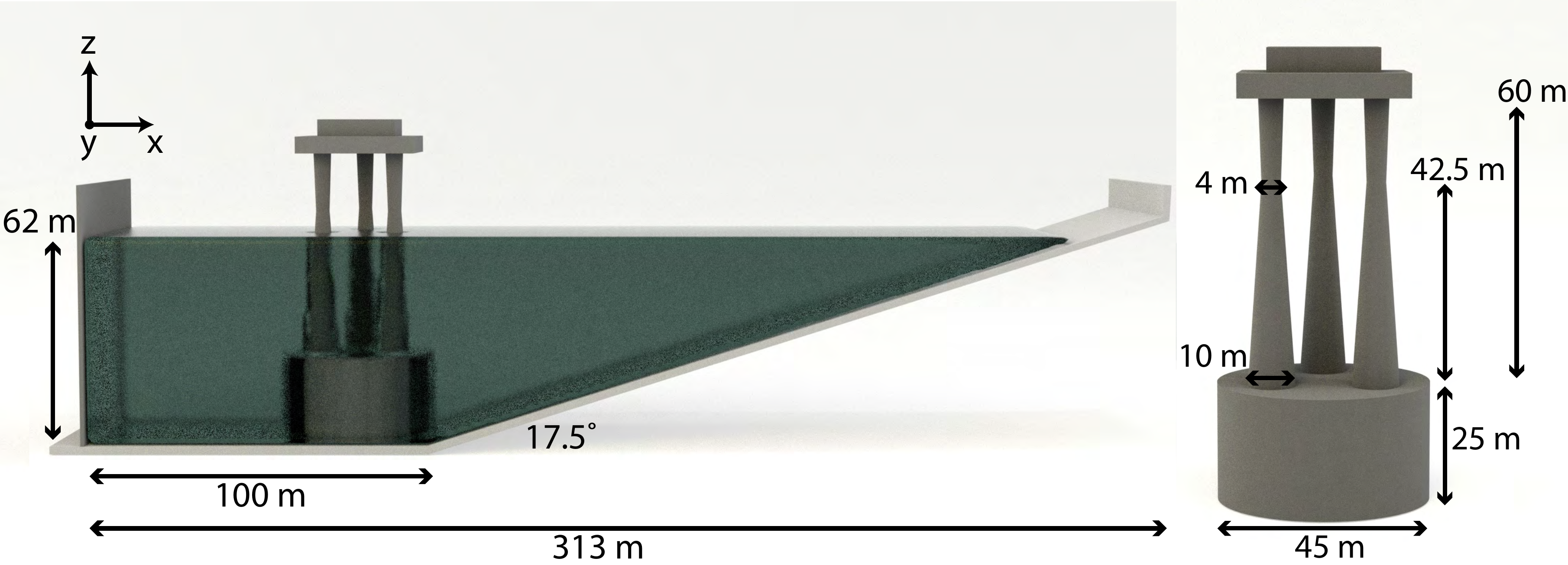}
    \caption{The numerical domain (left) and a close up view of the offshore platform (right).}
    \label{fig:domain}
\end{figure}

A sketch of the numerical domain is provided  in Fig.~\ref{fig:domain}. We are interested in analyzing the forces and moments on offshore structures in the deep ocean, where the dispersion relation is given by  $\omega_0^2 = g k_0$ . We consider  waves with peak period~\SI{10}{\second} so that the characteristic  wavelength is  $\lambda_0 = \frac{2\pi}{k_0} = \SI{156}{\meter}$. In addition, the depth of the wave tank is selected so that $\tanh(k_0 h) = 0.99$, thus the water depth is $h = \SI{62}{\meter}$. The  beach is setup at a~\SI{17.5}{\degree} angle and the length of the horizontal tank dimension, that is excluding the sloping  beach, is~\SI{100}{\meter}. The structure we consider is an offshore gravity platform (Fig.~\ref{fig:domain}, right) and the dimensions of the model are based on prototypical values. In particular, the base width of the platform is~\SI{45}{\meter} with height~\SI{25}{\meter}, three columns with base diameter~\SI{10}{\meter} extend from the bottom platform and narrow to a~\SI{4}{\meter} width at height~\SI{42.5}{\meter}. To generate waves, we implemented a hinged-type wave maker utilizing the corresponding Bi\'esel transfer function~\cite{Frigaard1993} to relate the wave height to the stroke of the paddle. For the flap-type wave maker we use, the  Biésel   transfer function is given by
\begin{equation}
 \frac{H}{S_0} = \frac{2 \sinh(kh)(1 - \cosh(kh) + kh\sinh(kh)}{kh(\sinh(kh) \cosh(kh) + kh)}, 
\end{equation}
where $S_0$ is the stroke at the free surface, $H$ is the wave height in the far-field, $k$ is the wavenumber, and $h$ is the water depth.

Wind generated ocean waves are empirically described by their energy spectrum. Here, we consider irregular seas with JONSWAP spectral density:
\begin{equation}\label{eq:spectrum}
    S(f) = \frac{\alpha g^2}{(2\pi)^4 f^5} \exp\biggl[-\frac{5}{4} \biggl(\frac{f_p}{f}\biggr)^{4}\biggr] \cdot \gamma^{\exp\bigl[ \frac{-(f-f_p)^2}{2\delta^2 f_p^2}\bigr]},
\end{equation}
where $\delta = 0.07$ for $k \le k_0$ and $\delta = 0.09$ for $k > k_0$ and $f_p$ is the peak frequency. In the original formulation $\alpha$ is related to the fetch and the mean wind speed, however for offshore applications, especially in the North Sea,  the following modified version  is often adopted~\cite{Naess2013} $\alpha = 5.058 {H_s^2 f_p^4}(1 - 0.287 \log\gamma)$. 

\subsection{Additional cases}

Here we provide results for the momentum variable for two different spectra with parameters:
\begin{itemize}
    \item JONSWAP 1: $\alpha = 0.060, \gamma = 3.0, H_s = \SI{13.2}{\meter}$.
    \item JONSWAP 2: $\alpha = 0.100, \gamma = 2.0, H_s = \SI{15.7}{\meter}$.
\end{itemize}
These parameters are representative of North Sea conditions. The chosen peak wave period is $T_p =  \SI{10}{\second}$. For these spectra, in Fig.~\ref{fig:probthetajonswap} we plot the distribution of the wavefield parameterization $f_\theta$.

The quantities of interest in this problem are the forces and moments acting on the platform:
\begin{equation}
q_{f} =  \max_{t\in[0,T]} \lvert{ F_x(t) \rvert} \quad \text{and} \quad q_{m}  =  \max_{t\in[0,T]} \lvert{ M_y(t) \rvert}.
\end{equation}
The forces and moments are normalized by $k_0^3/\rho g$ and $k_0^2/\rho g$, respectively. 

\begin{figure}[htb!]
    \centering
    \includegraphics[width=0.65\linewidth]{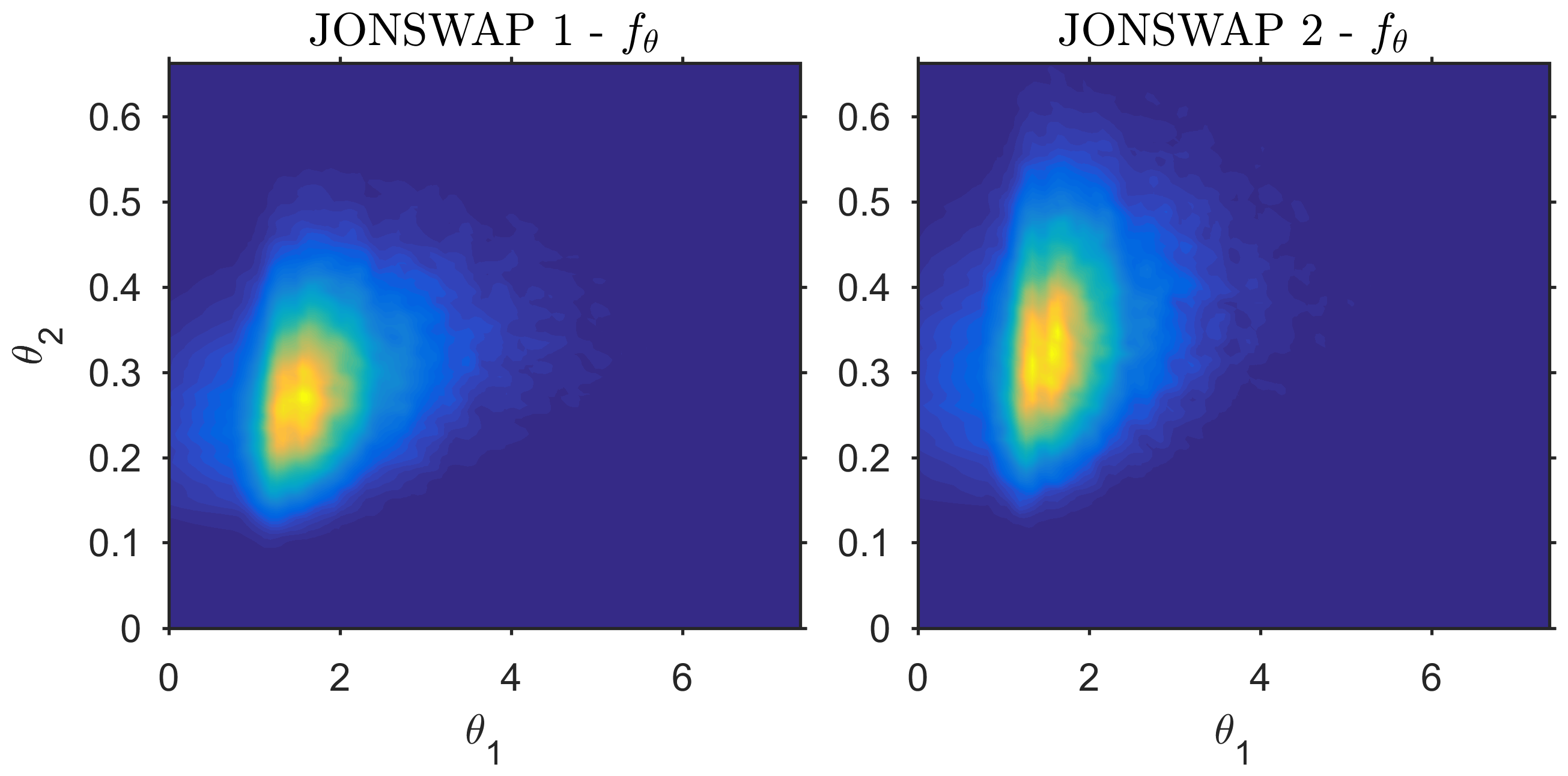}
    \caption{The parameter to observation map $T(\theta)$ for the force $F_x$ (left) and moment $M_y$ (right). Recall that $\theta_1$ is the lengthscale $L$ of a wave group and $\theta_2$ is the amplitude or height $A$ of the group.}
    \label{fig:probthetajonswap}
\end{figure}

In Fig.~\ref{fig:exact_sph_maps} we show the exact parameter to observation maps for the variables under consideration, which is computed by a cubic interpolation of 48 samples (i.e. numerical simulations of the SPH simulations for various $\theta$ values). It is important to note that the parameter to observation map \emph{remain fixed regardless of the spectra of the wave field.}  Fig~\ref{fig:exact_sph_pdfs1} display the target, i.e the true, pdf from the densely sampled map shown in Fig.~\ref{fig:exact_sph_maps} for both JONSWAP spectra.
\begin{figure}[htb!]
    \centering
    \includegraphics[width=0.625\linewidth]{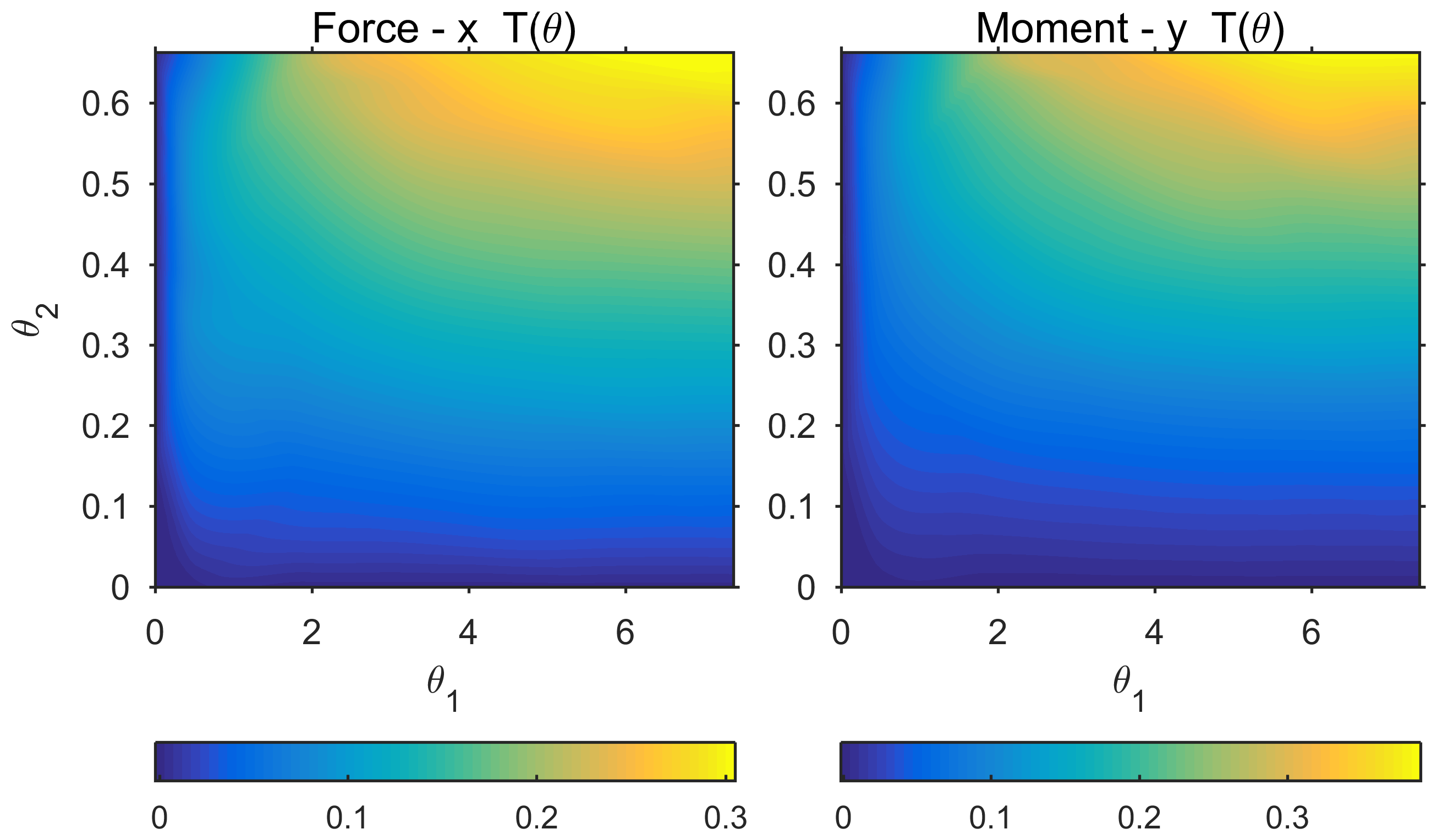}
    \caption{The parameter to observation map $T(\theta)$ for the force $F_x$ (left) and moment $M_y$ (right). Recall that $\theta_1$ is the lengthscale $L$ of a wave group and $\theta_2$ is the amplitude or height $A$ of the group.}
    \label{fig:exact_sph_maps}
\end{figure}
 \begin{figure*}[htb!]
    \centering

    \includegraphics{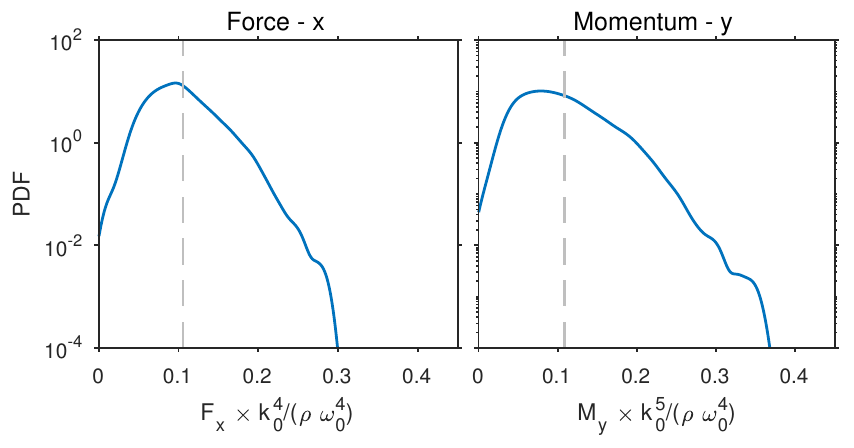}
    \includegraphics{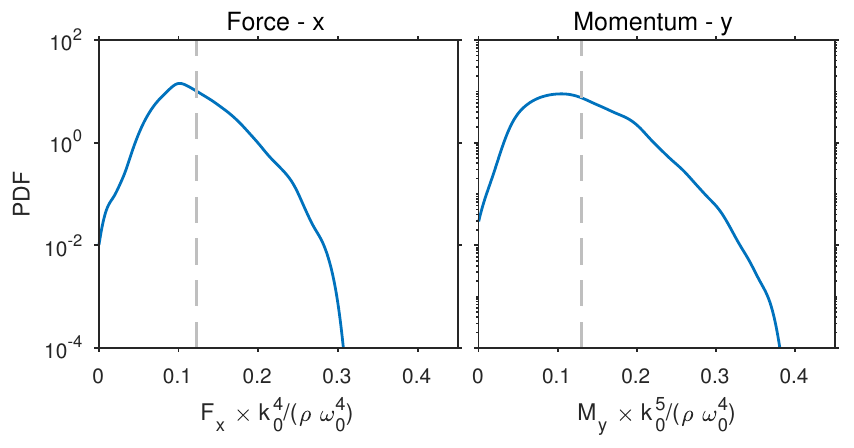}
    \caption{The exact pdf of the force and momentum variables (JONSWAP 1 left column and JONWAP 2 right column).}
    \label{fig:exact_sph_pdfs1}
\end{figure*}

In Fig.~\ref{fig:result_jonswap1_momentum} we show the result for the momentum variable for JONSWAP spectrum 1 (force variable results are provided in the main text). The results for JONSWAP spectrum 2 are summarized in Figs.~\ref{fig:result_jonswap2_force} and \ref{fig:result_jonswap2_momentum}.
\begin{figure*}[htb!]
    \centering
    \includegraphics{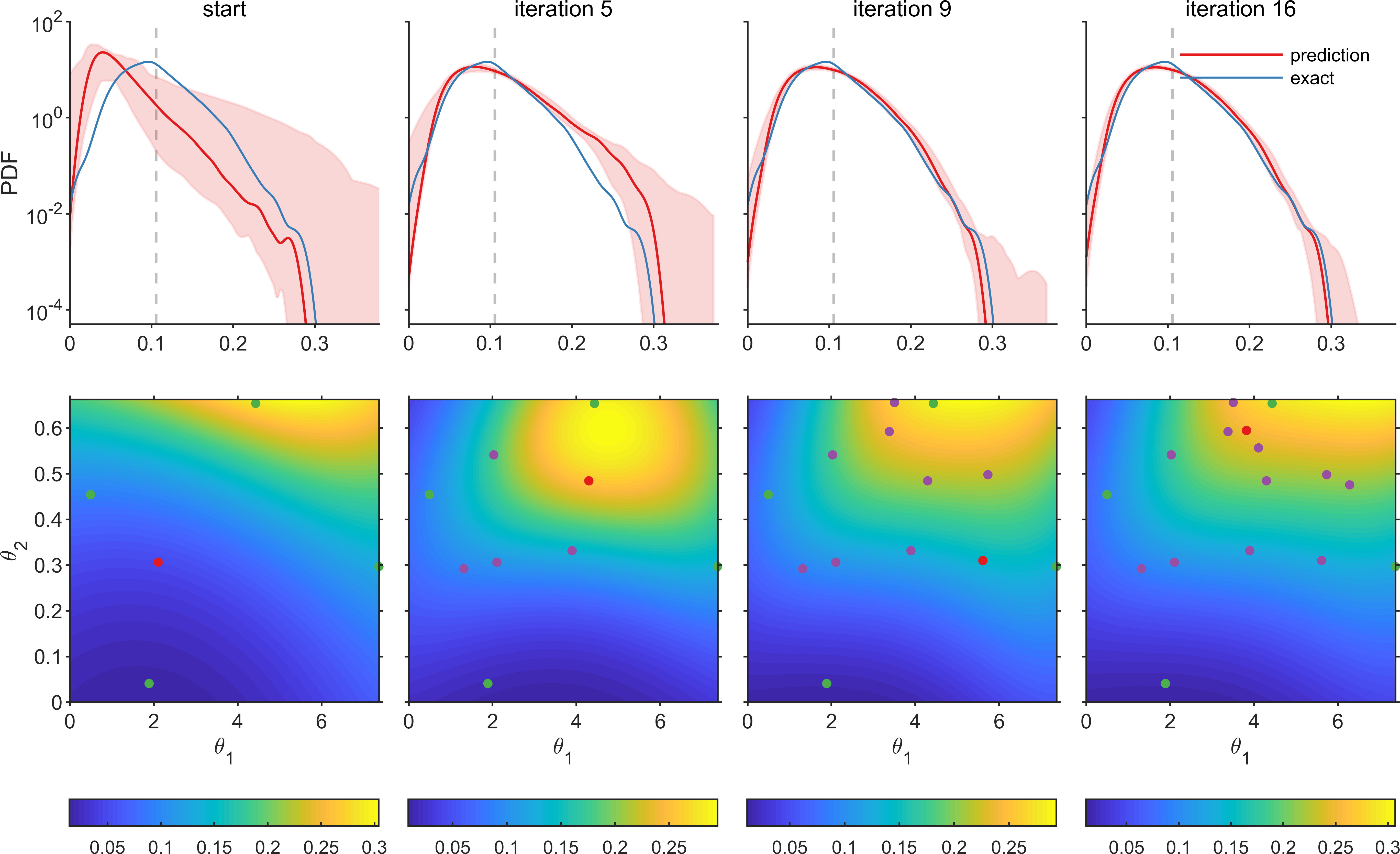}
    \caption{[JONSWAP 1]  Algorithm progression for the force variable.}
    \label{fig:result_jonswap1_force}
\end{figure*}
\begin{figure*}[htb!]
    \centering
    \includegraphics{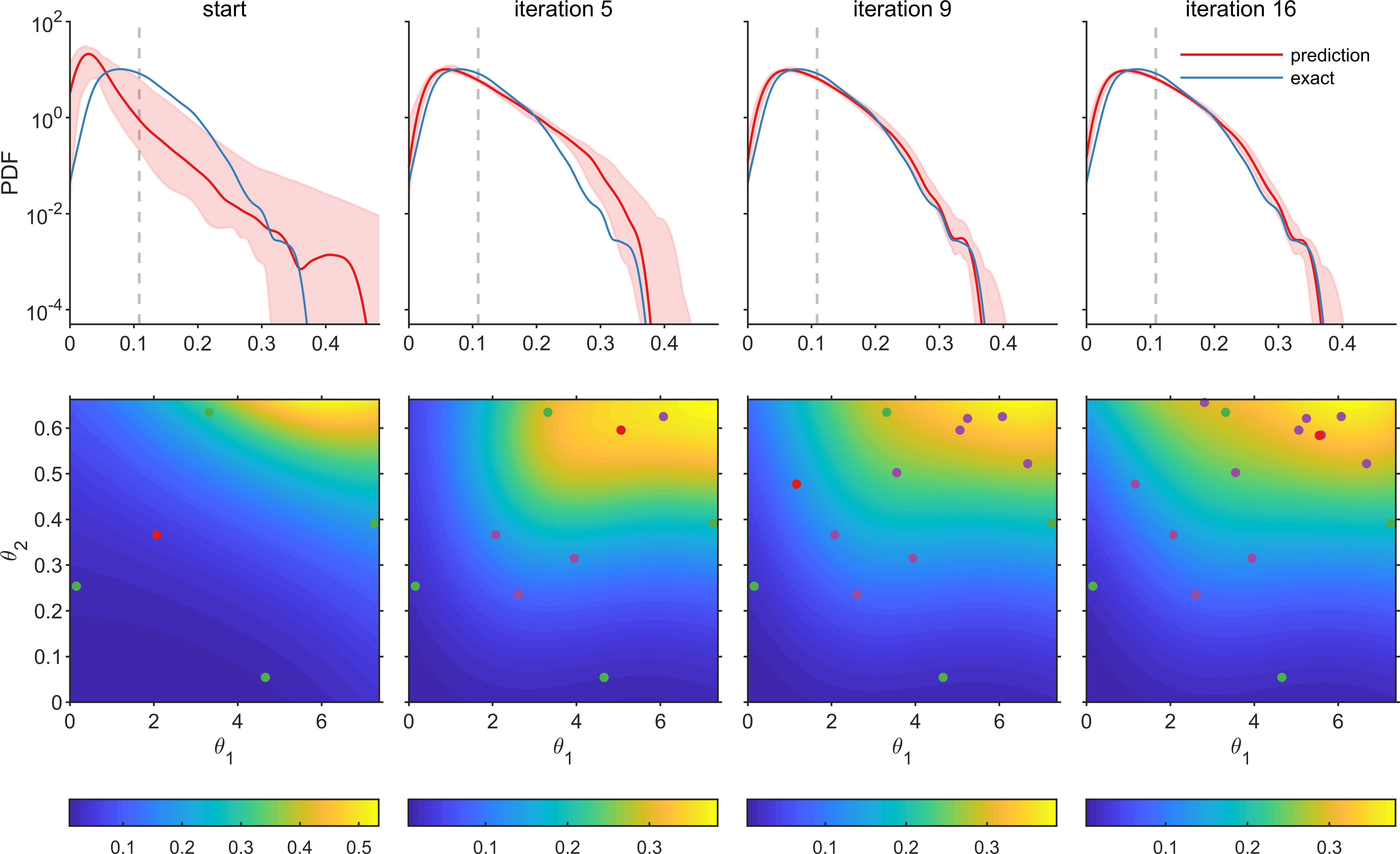}
    \caption{[JONSWAP 1]  Algorithm progression for the momentum variable.}
    \label{fig:result_jonswap1_momentum}
\end{figure*}
\begin{figure*}[htb!]
    \centering
    \includegraphics{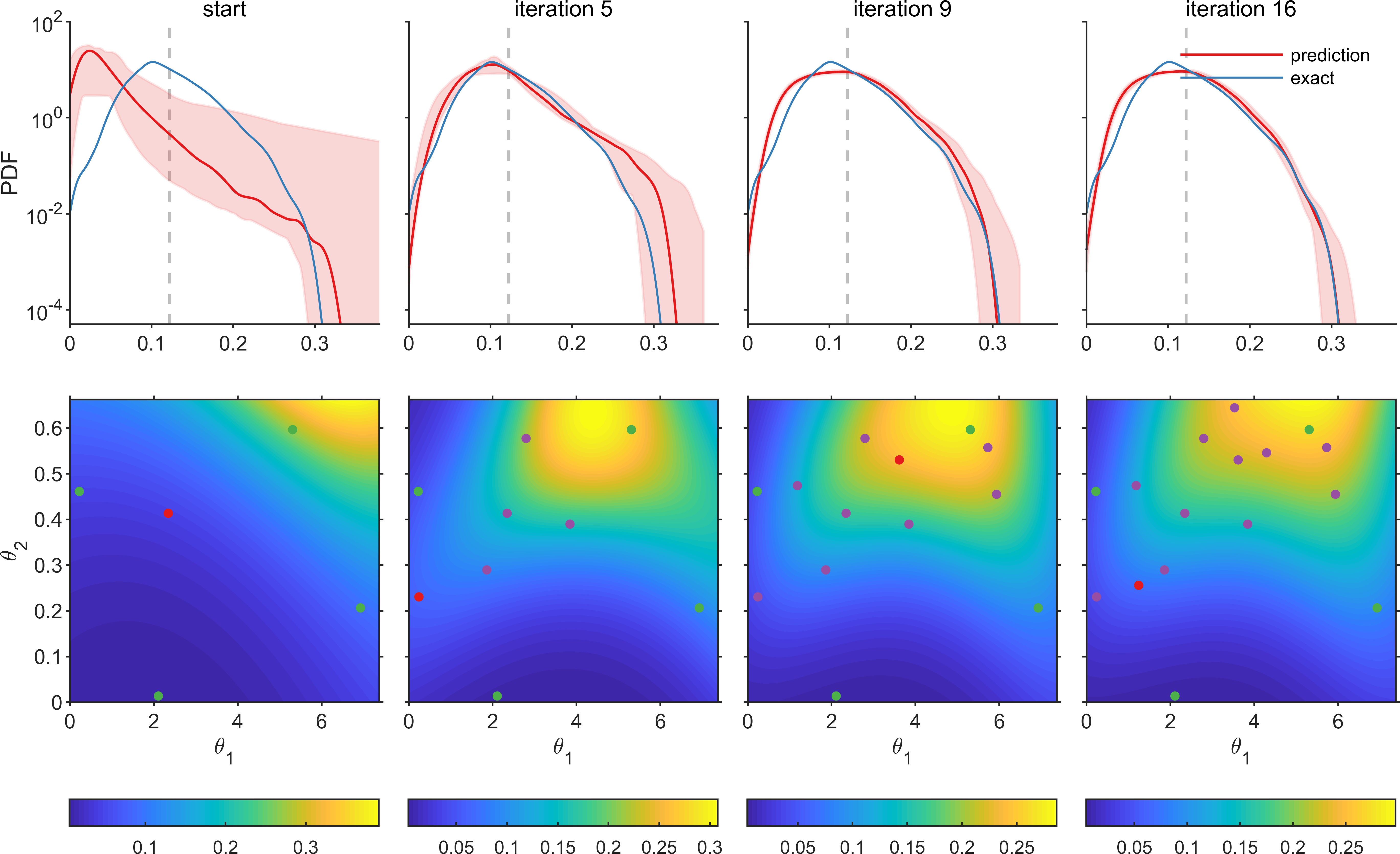}
    \caption{[JONSWAP 2] Algorithm progression for the force variable.}
    \label{fig:result_jonswap2_force}
\end{figure*}
\begin{figure*}[htb!]
    \centering
    \includegraphics{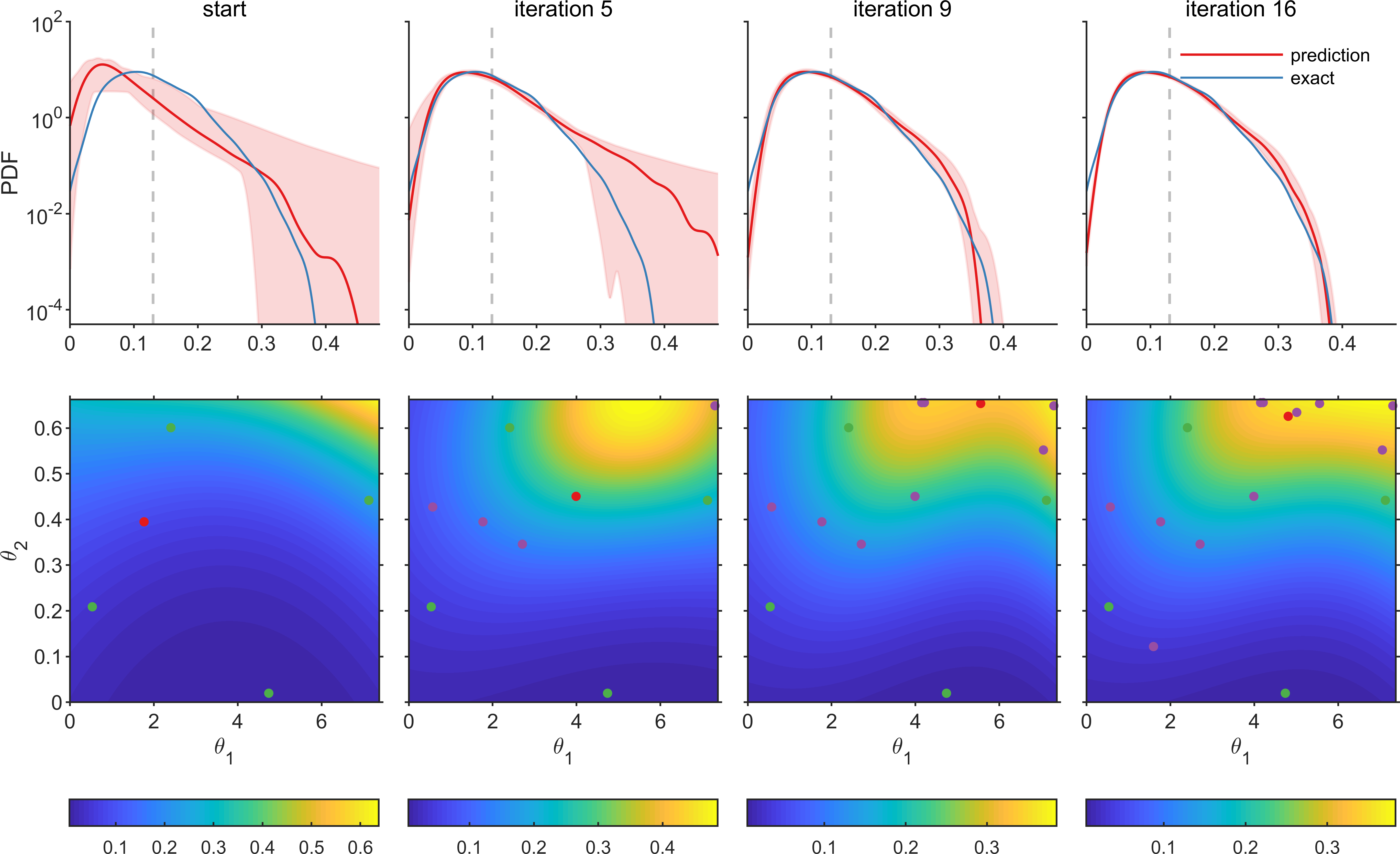}
    \caption{[JONSWAP 2] Algorithm progression for the momentum variable.}
    \label{fig:result_jonswap2_momentum}
\end{figure*}



\end{document}